\newcommand{\bx}{\mathbf{x}}
\newcommand{\bX}{\mathbf{X}}
\newcommand{\by}{\mathbf{y}}
\newcommand{\bw}{\mathbf{w}}
\newcommand{\bbf}{\mathbf{f}}
\newcommand{\bI}{\mathbf{I}}
\newcommand{\expect}{\mathbb{E}}
\newcommand{\normal}{\mathcal{N}}
\DeclareMathOperator*{\diag}{diag}
\newcommand{\data}{\mathcal{D}}
\newcommand{\elbo}{\mathcal{L}}
\newcommand{\mean}{{\boldsymbol \mu}}
\newcommand{\weights}{\mathbf{w}}
\DeclareMathOperator*{\argmax}{arg\,max}
\newcommand{\stdVec}{\boldsymbol{\sigma}}
\newcommand{\KL}[2]{\mathrm{KL}[#1\|#2]}
\definecolor{mydarkblue}{rgb}{0,0.08,0.45}
\newtheorem{thm}{Theorem}
\newtheorem{lem}[thm]{Lemma}
\newtheorem{cor}[thm]{Corollary}
\newtheorem{deff}{Definition}
\newlength\figurewidth
\newlength\figureheight
\newcommand\remove[1]{}
\title{Functional Variational Bayesian \\ Neural Networks}
\author{
  Shengyang Sun\thanks{Equal contribution.}${}^{\ast \dag}$, Guodong Zhang\footnotemark[1]${}^{\ast \dag}$, Jiaxin Shi\footnotemark[1]${}^{\ast \ddag}$, Roger Grosse$^{\dag}$ \\
  $^\dag$University of Toronto, $^\dag$Vector Institute, $^\ddag$Tsinghua University\\
  \texttt{\{ssy, gdzhang, rgrosse\}@cs.toronto.edu, shijx15@mails.tsinghua.edu.cn}
}
\begin{document}

\maketitle

\begin{abstract}
Variational Bayesian neural networks (BNNs) perform variational inference over weights, but it is difficult to specify meaningful priors and approximate posteriors in a high-dimensional weight space.
We introduce functional variational Bayesian neural networks (fBNNs), which maximize an Evidence Lower BOund (ELBO) defined directly on stochastic processes, i.e.~distributions over functions. 
We prove that the KL divergence between stochastic processes equals the supremum of marginal KL divergences over all finite sets of inputs. 
Based on this, we introduce a practical training objective which approximates the functional ELBO using finite measurement sets and the spectral Stein gradient estimator. 
With fBNNs, we can specify priors entailing rich structures, including Gaussian processes and implicit stochastic processes. Empirically, we find fBNNs extrapolate well using various structured priors, provide reliable uncertainty estimates, and scale to large datasets.
\end{abstract}

\section{Introduction}

Bayesian neural networks (BNNs) \citep{hinton1993keeping, neal1995bayesian} have the potential to combine the scalability, flexibility, and predictive performance of neural networks with principled Bayesian uncertainty modelling. However, the practical effectiveness of BNNs is limited by our ability to specify meaningful prior distributions and by the intractability of posterior inference. Choosing a meaningful prior distribution over network weights is difficult because the weights have a complicated relationship to the function computed by the network. Stochastic variational inference is appealing because the update rules resemble ordinary backprop \citep{graves2011practical, blundell2015weight}, but fitting accurate posterior distributions is difficult due to strong and complicated posterior dependencies \citep{louizos2016structured,zhang2017noisy,shi2017kernel}.

In a classic result, \citet{neal1995bayesian} showed that under certain assumptions, as the width of a shallow BNN was increased, the limiting distribution is a Gaussian process (GP). \citet{lee2017deep} recently extended this result to deep BNNs. Deep Gaussian Processes (DGP) \citep{cutajar2016random, salimbeni2017doubly} have close connections to BNNs due to similar deep structures. However, the relationship of finite BNNs to GPs is unclear, and practical variational BNN approximations fail to match the predictions of the corresponding GP. Furthermore, because the previous analyses related specific BNN architectures to specific GP kernels, it's not clear how to design BNN architectures for a given kernel. Given the rich variety of structural assumptions that GP kernels can represent~\citep{rasmussen2006gaussian, lloyd2014automatic, sun2018differentiable}, there remains a significant gap in expressive power between BNNs and GPs (not to mention stochastic processes more broadly).

In this paper, we perform variational inference directly on the distribution of functions. 
Specifically, we introduce functional variational BNNs (fBNNs), where a BNN 
is trained to produce a distribution of functions with small KL divergence to the true posterior over functions. We prove that the KL divergence between stochastic processes 
can be expressed as the supremum of marginal KL divergences at finite sets of points. Based on this, we present functional ELBO (fELBO) training objective. Then we introduce a GAN-like minimax formulation and a sampling-based approximation for functional variational inference.
To approximate the marginal KL divergence gradients, we adopt the recently proposed spectral Stein gradient estimator (SSGE)~\citep{shi2018spectral}.

Our fBNNs make it possible to specify stochastic process priors which encode richly structured dependencies between function values. This includes stochastic processes with explicit densities, such as GPs which can model various structures like smoothness and periodicity~\citep{lloyd2014automatic, sun2018differentiable}. We can also use stochastic processes with implicit densities, such as distributions over piecewise linear or piecewise constant functions. 
Furthermore, in contrast with GPs, fBNNs efficiently yield explicit posterior samples of the function. This enables fBNNs to be used in settings that require explicit minimization of sampled functions, such as Thompson sampling \citep{thompson1933likelihood, russo2016information} or predictive entropy search~\citep{hernandez2014predictive, wang2017max}.

One desideratum of Bayesian models is that they behave gracefully as their capacity is increased \citep{rasmussen2001occam}. Unfortunately, ordinary BNNs don't meet this basic requirement: unless the asymptotic regime is chosen very carefully (e.g.~\citet{neal1995bayesian}), BNN priors may have undesirable behaviors as more units or layers are added. Furthermore, larger BNNs entail more difficult posterior inference and larger description length for the posterior, causing degeneracy for large networks, as shown in \Cref{fig:fvi-occam}.
 In contrast, the prior of fBNNs is defined directly over the space of functions, thus the 
BNN can be made arbitrarily large without changing the functional variational inference problem. Hence, the predictions behave well as the capacity increases.

Empirically, we demonstrate that fBNNs generate sensible extrapolations for both explicit periodic priors and implicit piecewise priors. We show fBNNs outperform competing approaches on both small scale and large scale regression datasets. fBNNs' reliable uncertainty estimates enable state-of-art performance on the contextual bandits benchmark of \citet{riquelme2018deep}.

\section{Background}

\begin{figure}[t]
\centering
\vspace{-1em}
\includegraphics[width=0.9 \textwidth]{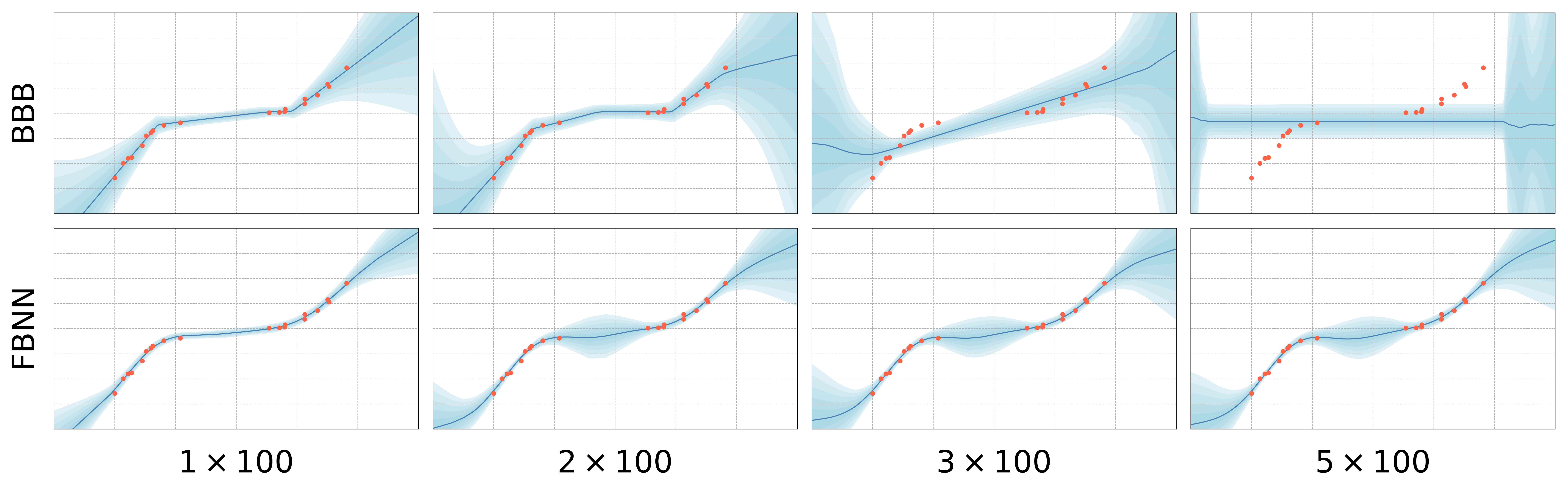} 
\vspace{-1em}
\caption{Predictions on the toy funcction $y=x^3$. Here $a \times b$ represents $a$ hidden layers of $b$ units. Red dots are $20$ training points. The blue curve is the mean of final prediction, and the shaded areas represent standard derivations. We compare fBNNs and Bayes-by-Backprop (BBB). For BBB, which performs weight-space inference, varying the network size leads to drastically different predictions. For fBNNs, which perform function-space inference, we observe consistent predictions for the larger networks. Note that the $1\times 100$ factorized Gaussian fBNNs network is not expressive enough to generate diverse predictions. } 
\label{fig:fvi-occam} 
\end{figure}

\subsection{Variational Inference for Bayesian Neural Networks}
\label{subsec:bnn}

Given a dataset $\mathcal{D} = \{(\bx_i, y_i)\}_{i=1}^n$,
a Bayesian neural network (BNN) is defined in terms of a prior $p(\bw)$ on the weights, as well as the likelihood $p(\mathcal{D} | \bw)$.
Variational Bayesian methods~\citep{hinton1993keeping,graves2011practical,blundell2015weight} attempt to fit an approximate posterior $q(\bw)$ to maximize the evidence lower bound (ELBO):
\begin{equation}
\label{eq:elbo}
\mathcal{L}_q = \mathbb{E}_{q}[\log p(\mathcal{D} | \bw)] - \KL{q(\bw)}{p(\bw)}.
\end{equation}
The most commonly used variational BNN training method is Bayes By Backprop (BBB)~\citep{blundell2015weight}, which uses a fully factorized Gaussian approximation to the posterior, i.e.~$q(\weights) = \normal(\weights;\mean, \diag(\stdVec^2))$.  Using the reparameterization trick~\citep{kingma2013auto}, the gradients of ELBO towards $\mu, \sigma$ can be computed by backpropagation, and then be used for updates.

Most commonly, the prior $p(\bw)$ is chosen for computational convenience; for instance, independent Gaussian or Gaussian mixture distributions. Other priors, including log-uniform priors~\citep{kingma2015variational, louizos2017bayesian} and horseshoe priors~\citep{ghosh2018structured, louizos2017bayesian}, were proposed for specific purposes such as model compression and model selection. But the relationships of weight-space priors to the functions computed by networks are difficult to characterize.

\subsection{Stochastic Processes}
\label{subsec:stochastic-process}
A stochastic process~\citep{lamperti2012stochastic} $F$ is typically defined as a collection of random variables, on a probability space $\left(\Omega, \mathcal{F}, P \right)$. The random variables, indexed by some set $\mathcal{X}$, all take values in the same mathematical space $\mathcal{Y}$ . In other words, given a probability space $\left(\Omega, \Sigma, P \right)$, a stochastic process can be simply written as $\{F(\bx): \bx\in \mathcal{X} \}$. For any point $\omega \in \Omega$, $F(\cdot, \omega)$ is a sample function mapping index space $\mathcal{X}$ to space $\mathcal{Y}$, which we denote as $f$ for notational simplicity.

For any finite index set $\bx_{1:n}=\{\bx_1, ..., \bx_n \}$, we can define the finite-dimensional marginal joint distribution over function values $\{F(\bx_1), \cdots, F(\bx_n)\}$. For example, Gaussian Processes have marginal distributions as multivariate Gaussians.

The Kolmogorov Extension Theorem~\citep{oksendal2003stochastic} shows that a stochastic process can be characterized by marginals over all finite index sets. Specifically, for a collection of joint distributions $\rho_{\bx_{1:n}}$, we can define a stochastic process $F$ such that for all $\bx_{1:n}$, $\rho_{\bx_{1:n}}$ is the marginal joint distribution of $F$ at $\bx_{1:n}$, as long as $\rho$ satisfies the following two conditions:

{\bf Exchangeability.} For any permutation $\pi$ of $\{1,\cdots, n \}$, $\rho_{\pi(\bx_{1:n})}(\pi(y_{1:n}))=\rho_{\bx_{1:n}}(y_{1:n}).$

{\bf Consistency.} For any $1 \leq m \leq n$, $\rho_{\bx_{1:m}}(y_{1:m})=\int \rho_{\bx_{1:n}}(y_{1:n}) dy_{m+1:n}. $

\subsection{Spectral Stein Gradient Estimator (SSGE)}
\label{subsub:ssge}
When applying Bayesian methods to modern probabilistic models, especially those with neural networks as components (e.g., BNNs and deep generative models), it is often the case that we have to deal with intractable densities. Examples include the marginal distribution of a non-conjugate model (e.g., the output distribution of a BNN), and neural samplers such as GANs~\citep{goodfellow2014generative}. A shared property of these distributions is that they are defined through a tractable sampling process, despite the intractable density. Such distributions are called implicit distributions~\citep{huszar2017variational}.

The Spectral Stein Gradient Estimator (SSGE)~\citep{shi2018spectral} is a recently proposed method for estimating the log density derivative function of an implicit distribution, only requiring samples from the distribution. Specifically, given a continuous differentiable density $q(\bx)$, and a positive definite kernel $k(\bx, \bx')$ in the Stein class~\citep{liu2016kernelized} of $q$, they show
\begin{equation} \label{eq:ssge}
\nabla_{x_i}\log q(\bx) = -\sum_{j=1}^{\infty}\Big[\mathbb{E}_q\nabla_{x_i}\psi_j(\bx)\Big]\psi_j(\bx),
\end{equation}
where $\{\psi_j\}_{j\geq 1}$ is a series of eigenfunctions of $k$ given by Mercer's theorem: $k(\bx, \bx') = \sum_{j}\mu_j\psi_j(\bx)\psi_j(\bx')$.
The Nystr{\"o}m method~\citep{baker1997numerical,williams2001using} is used to approximate the eigenfunctions $\psi_j(\bx)$ and their derivatives. The final estimator is given by truncating the sum in \Cref{eq:ssge} and replacing the expectation by Monte Carlo estimates.

\section{Functional Variational Bayesian Neural Networks}
\label{sec:fELBO}

\subsection{Functional Evidence Lower Bound ($\mathrm{fELBO}$)}

We introduce function space variational inference analogously to weight space variational inference (see \Cref{subsec:bnn}), except that the distributions are over functions rather than weights. We assume a stochastic process prior $p$ over functions $f : \mathcal{X} \to \mathcal{Y}$. This could be a GP, but we also allow stochastic processes without closed-form marginal densities, such as distributions over piecewise linear functions. For the variational posterior $q_{\phi} \in \mathcal{Q}$, we consider a neural network architecture with stochastic weights and/or stochastic inputs. Specifically, we sample a function from $q$ by sampling a random noise vector $\xi$ and defining $f(\bx)=g_{\phi}(\bx, \mathbf{\xi})$ for some function $g_\phi$. For example, standard weight space BNNs with factorial Gaussian posteriors can be viewed this way using the reparameterization trick~\citep{kingma2013auto, blundell2015weight}. (In this case, $\phi$ corresponds to the means and variances of all the weights.)  Note that because a \emph{single} vector $\mathbf{\xi}$ is shared among all input locations, it corresponds to randomness in the \emph{function}, rather than observation noise; hence, the sampling of $\mathbf{\xi}$ corresponds to epistemic, rather than aleatoric, uncertainty \citep{depeweg2017uncertainty}. 

Functional variational inference maximizes the functional ELBO (fELBO), akin to the weight space ELBO in \Cref{eq:elbo}, except that the distributions are over functions rather than weights.
\begin{equation}
\begin{aligned}
\mathcal{L}(q) := \expect_q[\log p(\data | f)] - \mathrm{KL}[q||p].
\end{aligned}
\end{equation}

Here $\KL{q}{p}$ is the KL divergence between two stochastic processes. As pointed out in \citet{matthews2016sparse}, it does not have a convenient form as $\int \log \frac{q(f)}{p(f)} q(f)df$ due to there is no infinite-dimensional Lebesgue measure~\citep{eldredge2016analysis}.
Since the KL divergence between stochastic processes is difficult to work with, we reduce it to a more familiar object: KL divergence between the marginal distributions of function values at finite sets of points, which we term \emph{measurement sets}. Specifically, let $\bX \in \mathcal{X}^{n}$ denote a finite measurement set and $P_{\bX}$ the marginal distribution of function values at $\bX$. We equate the function space KL divergence to the supremum of marginal KL divergences over all finite measurement sets:
\begin{thm}\label{thm:kl} For two stochastic processes $P$ and $Q$, 
\begin{equation}\label{eq:kl-define}
    \mathrm{KL}[P \| Q] = \sup_{n \in \mathbb{N}, \bX \in \mathcal{X}^{n}} \KL{P_{\bX}}{Q_{\bX}}.
\end{equation}
\end{thm}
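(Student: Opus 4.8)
The plan is to establish the two inequalities separately, treating $\mathrm{KL}[P\|Q]$ as defined via the supremum over finite-dimensional cylinder $\sigma$-algebras (the standard definition of KL divergence between measures on a function space, e.g.\ via the Radon--Nikodym derivative on the Borel $\sigma$-algebra generated by the cylinder sets, or equivalently as a supremum over finite measurable partitions). Write $\mathcal{F}_{\bX}$ for the sub-$\sigma$-algebra generated by evaluation at the finite set $\bX$, and recall the general measure-theoretic fact that for measures $P,Q$ on $(\Omega,\mathcal{F})$ and any sub-$\sigma$-algebra $\mathcal{G}\subseteq\mathcal{F}$, one has $\mathrm{KL}[P|_{\mathcal{G}}\,\|\,Q|_{\mathcal{G}}] \le \mathrm{KL}[P\|Q]$ (monotonicity of KL under coarsening, a consequence of the data-processing inequality / Jensen). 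Since $P|_{\mathcal{F}_{\bX}}$ and $Q|_{\mathcal{F}_{\bX}}$ are exactly the pushforwards giving the marginals $P_{\bX}, Q_{\bX}$, this immediately yields $\KL{P_{\bX}}{Q_{\bX}} \le \mathrm{KL}[P\|Q]$ for every $n$ and every $\bX$, hence $\sup_{n,\bX}\KL{P_{\bX}}{Q_{\bX}} \le \mathrm{KL}[P\|Q]$.

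For the reverse inequality, the key point is that the Borel $\sigma$-algebra on the function space (under the product/cylinder topology) is generated by the algebra $\mathcal{A} = \bigcup_{n,\bX}\mathcal{F}_{\bX}$ of all finite-dimensional cylinder events, and this algebra is an increasing (directed) family: given two finite sets $\bX,\bX'$, their union $\bX\cup\bX'$ is finite and $\mathcal{F}_{\bX}\vee\mathcal{F}_{\bX'}\subseteq\mathcal{F}_{\bX\cup\bX'}$. I would then invoke the standard supremum characterization of KL divergence,
\begin{equation*}
\mathrm{KL}[P\|Q] = \sup\Big\{\textstyle\sum_{i} P(A_i)\log\frac{P(A_i)}{Q(A_i)} : \{A_i\}\text{ a finite measurable partition of }\Omega\Big\},
\end{equation*}
together with a martingale-convergence / approximation argument: any finite partition into Borel sets can be approximated arbitrarily well (in the sense of making the partition-sum close) by a finite partition into cylinder sets, because $\mathcal{A}$ generates $\mathcal{F}$ and is an algebra. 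Concretely, for a finite Borel partition $\{A_i\}$ the conditional expectations $\expect_Q[\mathbf{1}_{A_i}\mid \mathcal{F}_{\bX_k}]$ converge ($Q$-a.s.\ and in $L^1$) to $\mathbf{1}_{A_i}$ along an increasing sequence $\bX_k$ whose union generates $\mathcal{F}$, by Lévy's upward theorem; refining the cylinder partitions accordingly, lower semicontinuity of KL under weak convergence (or simply the fact that the partition-sum for $\mathcal{F}_{\bX_k}$-measurable refinements increases to the value for the original partition) gives $\KL{P_{\bX_k}}{Q_{\bX_k}} \to$ a quantity $\ge \sum_i P(A_i)\log\frac{P(A_i)}{Q(A_i)} - \epsilon$. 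Taking suprema over Borel partitions then yields $\mathrm{KL}[P\|Q] \le \sup_{n,\bX}\KL{P_{\bX}}{Q_{\bX}}$.

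Combining the two inequalities gives the claimed equality. I expect the main obstacle to be the reverse direction: making the approximation of an arbitrary Borel (cylinder-generated) partition by finite-dimensional cylinder partitions fully rigorous, in particular handling the case $\mathrm{KL}[P\|Q]=\infty$ (where one must exhibit cylinder partitions whose sums diverge) and justifying the interchange of supremum and limit. The cleanest route is probably to phrase everything through the Radon--Nikodym derivative: if $P\ll Q$ with density $\frac{dP}{dQ}$, then $\frac{dP_{\bX}}{dQ_{\bX}} = \expect_Q[\tfrac{dP}{dQ}\mid \mathcal{F}_{\bX}]$, these form a $Q$-martingale indexed by the directed set of finite $\bX$'s, and $\mathrm{KL}[P_{\bX}\|Q_{\bX}] = \expect_Q[\varphi(\tfrac{dP_{\bX}}{dQ_{\bX}})]$ with $\varphi(t)=t\log t$ convex; monotone/Fatou plus the martingale convergence theorem then deliver convergence to $\mathrm{KL}[P\|Q]$ and monotonicity along refinements simultaneously, and the case $P\not\ll Q$ is handled by noting some finite-dimensional marginal must already fail absolute continuity (again because cylinder sets generate $\mathcal{F}$), forcing both sides to be $+\infty$.
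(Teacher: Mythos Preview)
Your overall strategy --- the partition definition of KL, data-processing for the easy inequality, and an approximation/martingale argument for the reverse --- matches the paper's. The easy direction is fine. The gap is in the reverse inequality when the index set $\mathcal{X}$ is uncountable. Your L\'evy argument invokes an increasing sequence of finite sets $\bX_k$ whose associated $\sigma$-algebras generate the full cylinder $\sigma$-algebra $\mathcal{F}$; but a countable union of finite subsets of $\mathcal{X}$ is countable, and evaluations at countably many points do \emph{not} generate $\mathcal{F}$ when $\mathcal{X}$ is uncountable (the event $\{f(x_0)\in B\}$ for any $x_0\notin\bigcup_k \bX_k$ is never recovered). Your alternative directed-net martingale route does satisfy $\bigvee_{\bX}\mathcal{F}_{\bX}=\mathcal{F}$, but then you need a net version of L\'evy's upward theorem together with an argument that $L^1$ or a.s.\ convergence of the conditional densities along the net yields convergence of their $t\log t$ integrals; you do not spell this out, and it is not entirely standard.

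The paper closes the gap with a short structural lemma: every set $H$ in the cylinder $\sigma$-algebra has only countably many ``restricted'' coordinates --- indices on which membership in $H$ actually depends. (The proof is one line: the collection of sets with countably many restricted coordinates is a $\sigma$-algebra containing all cylinder sets, hence equals $\mathcal{F}$.) Consequently any finite $\mathcal{F}$-measurable partition $\{A_i\}$ is already $\mathcal{F}_{T_c}$-measurable for some \emph{countable} $T_c\subset\mathcal{X}$, and one can then run the ordinary sequential increasing-KL argument along finite sets $\bX_k\uparrow T_c$ exactly as you intended. This countable-dependence lemma is the missing ingredient in your proposal; with it in hand either of your two routes becomes rigorous, and your singular-case claim also follows (the $Q$-null set witnessing $P\not\ll Q$ lives in some $\mathcal{F}_{T_c}$, so the finite-marginal KL divergences over subsets of $T_c$ diverge).
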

Roughly speaking, this result follows because the $\sigma$-algebra constructed with the Kolmogorov Extension Theorem (\Cref{subsec:stochastic-process}) is generated by \emph{cylinder sets} which depend only on finite sets of points. A full proof is given in \Cref{app:stochastic-process}.

\paragraph{fELBO.}

Using this characterization of the functional KL divergence, we rewrite the fELBO:
\begin{equation} \label{eq:f-elbo}
\begin{aligned}
\mathcal{L}(q)  &= \expect_q[\log p(\data | f)] -  \sup_{n \in \mathbb{N}, \bX \in \mathcal{X}^{n}} \mathrm{KL}[q(\mathbf{f}^\bX)||p(\mathbf{f}^\bX)] \\
&=  \inf_{n \in \mathbb{N}, \bX \in \mathcal{X}^{n}} \sum_{(\bx_i, y_i)\in \mathcal{D}} \expect_q[\log p(y_i | f(\bx_i))] - \mathrm{KL}[q(\mathbf{f}^\bX)||p(\mathbf{f}^\bX)] \\
&:=  \inf_{n \in \mathbb{N}, \bX \in \mathcal{X}^{n}} \mathcal{L}_{\bX}(q).
\end{aligned}
\end{equation}
We also denote $\elbo_{n}(q) := \underset{\bX \in \mathcal{X}^{n}}{\inf} \mathcal{L}_{\bX}(q)$ for the restriction to sets of $n$ points.
This casts maximizing the fELBO as a two-player zero-sum game analogous to a generative adversarial network (GAN)~\citep{goodfellow2014generative}: one player chooses the stochastic network, and the adversary chooses the measurement set. Note that the infimum may not be attainable, because the size of the measurement sets is unbounded. In fact, the function space KL divergence may be infinite, for instance if the prior assigns measure zero to the set of functions representable by a neural network \citep{arjovsky2017towards}. Observe that GANs face the same issue: because a generator network is typically limited to a submanifold of the input domain, an ideal discriminator could discriminate real and fake images perfectly. However, by limiting the capacity of the discriminator, one obtains a useful training objective. By analogy, we obtain a well-defined and practically useful training objective by restricting the measurement sets to a fixed finite size. This is discussed further in the next section.

\subsection{Choosing the Measurement Set}

As discussed above, we approximate the fELBO using finite measurement sets to have a well-defined and practical optimization objective. We now discuss how to choose the measurement sets.

\paragraph{Adversarial Measurement Sets.} \label{subsec:adversarial}

The minimax formulation of the fELBO naturally suggests a two-player zero-sum game, analogous to GANs, whereby one player chooses the stochastic network representing the posterior, and the adversary chooses the measurement set. 
\begin{align}\label{eq:adversarial-felbo} 
    \underset{q \in \mathcal{Q}}{\max} \;  \mathcal{L}_{m}(q) := \underset{q \in \mathcal{Q}}{\max} \; \min_{\bX \in \mathcal{X}^{m}} \mathcal{L}_{\bX}(q) .
\end{align}
We adopt concurrent optimization akin to GANs~\citep{goodfellow2014generative}. In the inner loop, we minimize $\mathcal{L}_\bX(q)$ with respect to $\bX$; in the outer loop, we maximize $\mathcal{L}_\bX(q)$ with respect to $q$.

Unfortunately, this approach did not perform well in terms of generalization. The measurement set which maximizes the KL term is likely to be close to the training data, since these are the points where one has the most information about the function. But the KL term is the only part of the fELBO encouraging the network to match the prior structure. Hence, if the measurement set is close to the training data, then nothing will encourage the network to exploit the structured prior for extrapolation.

\paragraph{Sampling-Based Measurement Sets.} \label{subsec:curiosity}

Instead, we adopt a sampling-based approach. In order to use a structured prior for extrapolation, the network needs to match the prior structure both near the training data and in the regions where it must make predictions. Therefore, we sample measurement sets which include both (a) random training inputs, and (b) random points from the domain where one is interested in making predictions. We replace the minimization in \Cref{eq:adversarial-felbo} with a sampling distribution $c$, and then maximize the expected $\mathcal{L}_{{\bX}}(q)$:
\begin{align} \label{eq:curiosity-felbo}
\underset{q \in \mathcal{Q}}{\max} \;   \mathbb{E}_{\data_s} \mathbb{E}_{\bX^M\sim c} \; \mathcal{L}_{{\bX^M, \bX^{D_s}}}(q).
\end{align}
where $\bX^M$ are $M$ points independently drawn from $c$.

\paragraph{Consistency.}

With the restriction to finite measurement sets, one only has an upper bound on the true fELBO. Unfortunately, this means the approximation is not a lower bound on the log marginal likelihood (log-ML) $\log p(\mathcal{D})$. Interestingly, if the measurement set is chosen to include all of the training inputs, then $\mathcal{L}(q)$ is in fact a log-ML lower bound:
\begin{thm}[Lower Bound] \label{thm:evidence-lb}
If the measurement set $\bX$ contains all the training inputs $\bX^D$, then
\begin{equation}
	\mathcal{L}_{\bX}(q) = \log p(\mathcal{D}) - \KL{q(\bbf^\bX)}{p(\bbf^\bX|\mathcal{D})}  \leq \log p(\mathcal{D}).
\end{equation}
\end{thm}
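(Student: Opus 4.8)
The plan is to expand $\mathcal{L}_{\bX}(q)$ directly and recognize that, once $\bX$ contains all training inputs $\bX^D$, the expected log-likelihood term and the marginal KL term recombine into the log marginal likelihood minus a single KL divergence between marginals at $\bX$. First I would write out, using the definition in \Cref{eq:f-elbo},
\[
\mathcal{L}_{\bX}(q) = \sum_{(\bx_i,y_i)\in\mathcal{D}} \expect_q[\log p(y_i \given f(\bx_i))] - \KL{q(\bbf^\bX)}{p(\bbf^\bX)}.
\]
Since $\bX\supseteq\bX^D$, the likelihood $p(\mathcal{D}\given f)=\prod_i p(y_i\given f(\bx_i))$ depends on $f$ only through the coordinates $\bbf^\bX$ (indeed only through $\bbf^{\bX^D}$, a subvector), so $\expect_q[\log p(\mathcal{D}\given f)] = \expect_{q(\bbf^\bX)}[\log p(\mathcal{D}\given \bbf^\bX)]$ where now everything is a finite-dimensional integral with ordinary Lebesgue densities.

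Next I would combine the two finite-dimensional terms under a single expectation over $q(\bbf^\bX)$:
\[
\mathcal{L}_{\bX}(q) = \expect_{q(\bbf^\bX)}\!\left[\log p(\mathcal{D}\given\bbf^\bX) + \log\frac{p(\bbf^\bX)}{q(\bbf^\bX)}\right]
= \expect_{q(\bbf^\bX)}\!\left[\log\frac{p(\mathcal{D}\given\bbf^\bX)\,p(\bbf^\bX)}{q(\bbf^\bX)}\right].
\]
Then I apply Bayes' rule at the finite set $\bX$: $p(\mathcal{D}\given\bbf^\bX)\,p(\bbf^\bX) = p(\mathcal{D})\,p(\bbf^\bX\given\mathcal{D})$, where $p(\bbf^\bX\given\mathcal{D})$ is the marginal at $\bX$ of the true function-space posterior (this is the step that uses the consistency of the posterior stochastic process — the marginal of the posterior process equals the posterior of the marginal, which holds because conditioning on $\mathcal{D}$ only involves the coordinates $\bbf^{\bX^D}\subseteq\bbf^\bX$). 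Substituting gives
\[
\mathcal{L}_{\bX}(q) = \log p(\mathcal{D}) + \expect_{q(\bbf^\bX)}\!\left[\log\frac{p(\bbf^\bX\given\mathcal{D})}{q(\bbf^\bX)}\right] = \log p(\mathcal{D}) - \KL{q(\bbf^\bX)}{p(\bbf^\bX\given\mathcal{D})},
\]
and the inequality $\mathcal{L}_{\bX}(q)\le\log p(\mathcal{D})$ is immediate from nonnegativity of KL divergence.

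The main obstacle is purely a measure-theoretic bookkeeping point rather than a hard estimate: one must be careful that ``$p(\bbf^\bX\given\mathcal{D})$'' genuinely is the $\bX$-marginal of a well-defined posterior stochastic process, so that Bayes' rule may be applied at the finite level. This is where the hypothesis $\bX^D\subseteq\bX$ is essential — it guarantees that the likelihood is $\sigma(\bbf^\bX)$-measurable, so the disintegration of the joint over $(\bbf^\bX, \mathcal{D})$ is the ordinary finite-dimensional Bayes rule, and no infinite-dimensional density is needed. If $\bX$ omitted some training input, the likelihood would not be a function of $\bbf^\bX$ alone and the clean factorization would fail. I would also note in passing that all the expectations are well-defined (possibly $-\infty$) because the integrand is bounded above, so the manipulations are valid even when the KL term is infinite.
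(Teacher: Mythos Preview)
Your proposal is correct and follows essentially the same route as the paper's proof: expand $\mathcal{L}_{\bX}(q)$, use $\bX^D\subseteq\bX$ so that the likelihood is a function of $\bbf^\bX$ alone, and apply finite-dimensional Bayes' rule to rewrite the integrand as $\log p(\mathcal{D}) + \log \frac{p(\bbf^\bX\given\mathcal{D})}{q(\bbf^\bX)}$. The paper's version is a terse four-line display with the same algebra; your added commentary on why $\bX^D\subseteq\bX$ is needed for the likelihood to be $\sigma(\bbf^\bX)$-measurable is a welcome clarification but not a different argument.
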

The proof is given in \Cref{app:proof-lower-bound}.
 
To better understand the relationship between adversarial and sampling-based inference, we consider the idealized scenario where the measurement points in both approaches include all training locations, i.e., $\bX = \{\bX^D, \bX^M\}$. Let $\bbf^M, \bbf^D$ be the function values at $\bX^M, \bX^D$, respectively. By \Cref{thm:evidence-lb},
\begin{equation}
\label{eq:felbo-proper-lb}
	\begin{aligned}
		\mathcal{L}_{{\bX^M, \bX^D}}(q) &= \log p(\mathcal{D}) - \KL{q(\bbf^M, \bbf^D)}{p(\bbf^M, \bbf^D|\mathcal{D})}.
	\end{aligned}
\end{equation}
So maximizing $\mathcal{L}_{{\bX^M, \bX^D}}(q)$ is equivalent to minimizing the KL divergence from the true posterior on points $\bX^M, \bX^D$. Based on this, we have the following consistency theorem that helps justify the use of adversarial and sampling-based objectives with finite measurement points.

\begin{cor}[Consistency under finite measurement points] \label{thm:gp-consist}
	Assume that the true posterior $p(f|\mathcal{D})$ is a Gaussian process and the variational family $\mathcal{Q}$ is all Gaussian processes. We have the following results if $M>1$ and $\mathrm{supp}(c)=\mathcal{X}$:
	\begin{equation}
	\begin{aligned}
		\underbrace{\underset{q\in \mathcal{Q}}{\argmax} \left\{ \underset{\bX^M}{\min}~\mathcal{L}_{{\bX^M, \bX^D}}(q) \right\}}_{\text{Adversarial}} 
		= \underbrace{\underset{q\in \mathcal{Q}}{\argmax} \left\{ \expect_{\bX^M\sim c}~\mathcal{L}_{{\bX^M, \bX^D}}(q) \right\} }_{\text{Sampling-Based}} 
		= p(f|\mathcal{D}) .
	\end{aligned}
	\end{equation}
\end{cor}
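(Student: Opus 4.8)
The plan is to collapse both optimization problems to a single statement about a marginal KL divergence, observe that $\log p(\mathcal{D})$ is a common ceiling for both objectives, and then show that $q=p(f\mid\mathcal{D})$ is the \emph{unique} member of $\mathcal{Q}$ attaining it. Since both the adversarial measurement set $\{\bX^M,\bX^D\}$ and the sampling-based one contain $\bX^D$, \Cref{thm:evidence-lb} (specialized as in \Cref{eq:felbo-proper-lb}) gives, for every $\bX^M$,
\[
\mathcal{L}_{\bX^M,\bX^D}(q) \;=\; \log p(\mathcal{D}) - \KL{q(\bbf^M,\bbf^D)}{p(\bbf^M,\bbf^D\mid\mathcal{D})} \;\le\; \log p(\mathcal{D}),
\]
with equality iff the finite marginals of $q$ and of $p(f\mid\mathcal{D})$ coincide on $\bX^M\cup\bX^D$. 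Taking $q=p(f\mid\mathcal{D})\in\mathcal{Q}$ (which is a GP, hence admissible) makes every such KL vanish, so $\min_{\bX^M}\mathcal{L}_{\bX^M,\bX^D}(q)=\expect_{\bX^M\sim c}\mathcal{L}_{\bX^M,\bX^D}(q)=\log p(\mathcal{D})$, the global maximum of each objective over $\mathcal{Q}$. Thus $p(f\mid\mathcal{D})$ lies in both argmax sets, and it remains to prove uniqueness.

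For the adversarial objective, suppose $q\in\mathcal{Q}$ is a GP with $q\neq p(f\mid\mathcal{D})$. Then either their mean functions differ at some point $\bx_0$, or their covariance functions differ at some pair $(\bx_0,\bx_1)$. Because $M>1$, I can choose $\bX^M$ containing $\{\bx_0,\bx_1\}$ (padding with arbitrary extra points when $M>2$); the two Gaussians on $\bX^M\cup\bX^D$ then have different mean vectors or different covariance matrices, so their KL equals some $\varepsilon>0$, whence $\min_{\bX^M}\mathcal{L}_{\bX^M,\bX^D}(q)\le\log p(\mathcal{D})-\varepsilon<\log p(\mathcal{D})$. Hence $q$ is not a maximizer, and the adversarial argmax is exactly $\{p(f\mid\mathcal{D})\}$. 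This is precisely where $M>1$ is needed: a single non-training measurement point cannot detect a discrepancy in the covariance \emph{between} two non-training inputs.

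For the sampling-based objective, if $\expect_{\bX^M\sim c}\mathcal{L}_{\bX^M,\bX^D}(q)=\log p(\mathcal{D})$ then $\expect_{\bX^M\sim c}\KL{q(\bbf^M,\bbf^D)}{p(\bbf^M,\bbf^D\mid\mathcal{D})}=0$, and since the integrand is nonnegative it vanishes for $c^{\otimes M}$-almost every $\bX^M$. Pushing forward to the first two coordinates (a product measure on $\mathcal{X}^2$ with full support, using $\mathrm{supp}(c)=\mathcal{X}$ and $M\ge 2$), the mean and covariance functions of $q$ and of $p(f\mid\mathcal{D})$ agree $(c\otimes c)$-almost everywhere; under the (standard, here implicit) regularity assumption that these functions are continuous, the agreement sets are closed and of full measure, hence equal to $\mathcal{X}$ and $\mathcal{X}^2$ respectively, so $q=p(f\mid\mathcal{D})$. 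Combining the two directions, both argmax sets equal $\{p(f\mid\mathcal{D})\}$, which is the claim. I expect this last step---upgrading ``zero KL for almost every measurement set'' to genuine equality of the two stochastic processes---to be the main obstacle, as it is the only point where $\mathrm{supp}(c)=\mathcal{X}$ (and a regularity hypothesis on the GP families) is genuinely used; everything else is bookkeeping on top of \Cref{thm:evidence-lb} and the fact that a GP is determined by its mean and covariance functions.
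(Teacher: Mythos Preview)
Your proposal is correct and follows essentially the same approach as the paper: both reduce the problem via \Cref{thm:evidence-lb} to showing that a GP attaining the ceiling $\log p(\mathcal{D})$ must have all the relevant marginal KL divergences equal to zero, and then use the fact that a GP is determined by its mean and covariance functions (hence by its two-point marginals, which is where $M>1$ enters). If anything, you are more careful than the paper on one point: in the sampling-based case the paper passes directly from $\expect_{\bX^M\sim c}\mathrm{KL}[\cdot]=0$ to ``$\mathrm{KL}=0$ for any $\bX^M\in\mathcal{X}^M$'' without comment, whereas you correctly note that nonnegativity only gives this $c^{\otimes M}$-almost everywhere and that a continuity hypothesis on the mean and covariance functions (together with $\mathrm{supp}(c)=\mathcal{X}$) is what upgrades this to pointwise equality.
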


The proof is given in \Cref{app:consistency}. While it is usually impractical for the measurement set to contain all the training inputs, it is still reassuring that a proper lower bound can be obtained with a finite measurement set.
   
\subsection{KL Divergence Gradients} \label{subsec:kl-est}
While the likelihood term of the fELBO is tractable, the KL divergence term remains intractable because we don't have an explicit formula for the variational posterior density $q_{\phi}(\bbf^\bX)$. (Even if $q_\phi$ is chosen to have a tractable density in weight space \citep{louizos2017multiplicative}, the marginal distribution over $\bbf^\bX$ is likely intractable.) To derive an approximation, we first observe that
\begin{equation}
\label{eq:kl-grad}
\begin{aligned}
  \nabla_{\phi}  \KL{q_{\phi}({\bbf^\bX})}{p({\bbf^\bX})} =
   \mathbb{E}_{q}  \left[ \nabla_\phi \log q_{\phi}({\bbf^\bX}) \right]
    + 
    \mathbb{E}_{\xi}  \left[ \nabla_\phi \bbf^\bX ({\color{blue}\nabla_\bbf \log q({\bbf^\bX})} - {\color{red} \nabla_\bbf \log p({\bbf^\bX})} ) \right].
\end{aligned}
\end{equation}
The first term (expected score function) in~\Cref{eq:kl-grad} is zero, so we discard it.\footnote{But note that it may be useful as a control variate~\citep{roeder2017sticking}.} The Jacobian $\nabla_\phi \bbf^\bX$ can be exactly multiplied by a vector using backpropagation. Therefore, it remains to estimate the log-density derivatives ${\color{blue}\nabla_\bbf \log q({\bbf^\bX})} $ and ${\color{red} \nabla_\bbf \log p({\bbf^\bX})}$.

The entropy derivative ${\color{blue}\nabla_\bbf \log q({\bbf^\bX})} $ is generally intractable. For priors with tractable marginal densities such as GPs~\citep{rasmussen2006gaussian}\footnote{\Cref{app:injected-noise} introduces an additional fix to deal with the GP kernel matrix stability issue.} and Student-t Processes~\citep{shah2014student},
  ${\color{red} \nabla_\bbf \log p({\bbf^\bX})}$ is tractable. 
However, we are also interested in implicit stochastic process priors, i.e.~${\color{red} \nabla_\bbf \log p({\bbf^\bX})}$ is also intractable. Because the SSGE (see Section~\ref{subsub:ssge}) can estimate score functions for both in-distribution and out-of-distribution samples, we use it to estimate both derivative terms in all our experiments. (We compute ${\color{red} \nabla_\bbf \log p({\bbf^\bX})}$ exactly whenever it is tractable.)

\subsection{The Algorithm} 
Now we present the whole algorithm for fBNNs in \Cref{algo:fbnn}. In each iteration, our measurement points include a mini-batch $\data_s$ from the training data and random points $\bX^{M}$ from a distribution $c$. We forward $\bX^{D_s}$ and $\bX^{M}$ together through the network $g(\cdot ; \phi)$ which defines the variational posterior $q_{\phi}$. Then we try to maximize the following objective corresponding to fELBO: 
\begin{equation} \label{eq:final-algo}
	\frac{1}{|\data_s|}\sum_{(\bx,y)\in \mathcal{D}_s} \mathrm{E}_{q_{\phi}} \left[\log p(y|f(\bx))\right] - \lambda \KL{q(\bbf^{\mathcal{D}_s}, \bbf^M)}{p(\bbf^{\mathcal{D}_s}, \bbf^M)}.
\end{equation}
Here $\lambda$ is a regularization hyperparameter. In principle, $\lambda$ should be set as $\frac{1}{|\data|}$ to match fELBO in \Cref{eq:f-elbo}. However, because the KL in \Cref{eq:final-algo} uses a restricted number of measurement points, it only terms a lower bound of the functional KL divergence $\KL{q(f)}{p(f)}$, thus bigger $\lambda$ is favored to control overfitting. 
We used $\lambda=\frac{1}{|\data_s|}$ in practice, in which case 
\Cref{eq:final-algo} is a proper lower bound of $\log p(\mathcal{D}_s)$, as shown in Theorem~\ref{thm:evidence-lb}.
Moreover, when using GP priors, we injected Gaussian noise on the function outputs for stability consideration (see \Cref{app:injected-noise} for details).

			{\small	\vspace{-.1cm}\begin{algorithm}[t]
					\centering
					\caption{Functional Variational Bayesian Neural Networks (fBNNs) \label{algo:fbnn}}
					\begin{algorithmic}[1]
						\Require{ Dataset $\mathcal{D}$, variational posterior $g(\cdot)$, prior $p$ (explicit or implicit), KL weight $\lambda$.}
						\Require{ Sampling distribution $c$ for random measurement points.}
						\While{$\phi$ not converged}
						\State{$\bX^M \sim c$; $D_S \subset \mathcal{D}$ } \Comment{sample measurement points}
						\State{$\bbf_i = g([\bX^M, \bX^{D_S}], \xi_i; \phi), \; i=1\cdots k.$} \Comment{sample $k$ function values}
						\State{$\Delta_1 = \frac{1}{k} \frac{1}{|D_s|} \sum_{i} \sum_{(x, y)} \nabla_{\phi} \log p(y|\bbf_i(x))$} \Comment{compute log likelihood gradients}
						\State{$\Delta_2 = \mathrm{SSGE}(p, \bbf_{1:k})$} \Comment{estimate KL gradients}
						\State{$\phi \leftarrow \text{Optimizer}(\phi, \Delta_1- \lambda \Delta_2)$} \Comment{update the parameters}
						\EndWhile
					\end{algorithmic}
				\end{algorithm}
		}

\section{Related work}
\label{sec:related-work}

\paragraph{Bayesian neural networks.} Variational inference was first applied to neural networks by \citet{peterson1987mean} and \citet{hinton1993keeping}.
More recently, \citet{graves2011practical} proposed a practical method for variational inference with fully factorized Gaussian posteriors which used a simple (but biased) gradient estimator.
Improving on that work, \citet{blundell2015weight} proposed an unbiased gradient estimator using the reparameterization trick of \citet{kingma2013auto}. There has also been much work~\citep{louizos2016structured,sun2017learning,zhang2017noisy, bae2018eigenvalue} on modelling the correlations between weights using more complex Gaussian variational posteriors. Some non-Gaussian variational posteriors have been proposed, such as multiplicative normalizing flows \citep{louizos2017multiplicative} and implicit distributions \citep{shi2017kernel}. Neural networks with dropout were also interpreted as BNNs \citep{gal2016dropout, gal2017concrete}.
Local reparameterization trick \citep{kingma2015variational} and Flipout \citep{wen2018flipout} try to decorrelate the gradients within a mini-batch for reducing variances during training.
However, all these methods place priors over the network parameters. Often, spherical Gaussian priors are placed over the weights for convenience. Other priors, including log-uniform priors~\citep{kingma2015variational, louizos2017bayesian} and horseshoe priors~\citep{ghosh2018structured, louizos2017bayesian}, were proposed for specific purposes such as model compression and model selection. But the relationships of weight-space priors to the functions computed by networks are difficult to characterize.

\paragraph{Functional Priors.} There have been other recent attempts to train BNNs in the spirit of functional priors. \citet{flam2017mapping} trained a BNN prior to mimic a GP prior, but they still required variational inference in weight space. Noise Contrastive Priors \citep{hafner2018reliable} are somewhat similar in spirit to our work in that they use a random noise prior in the function space. The prior is incorporated by adding a regularization term to the weight-space ELBO, and is not rich enough to encourage extrapolation and pattern discovery.  Neural Processes (NP) \citep{2018arXiv180701622G} try to model any conditional distribution given arbitrary data points, whose prior is specified implicitly by prior samples. However, in high dimensional spaces, conditional distributions become increasingly complicated to model.
Variational Implicit Processes (VIP) \citep{ma2018variational} are, in a sense, the reverse of fBNNs: they specify BNN priors and use GPs to approximate the posterior. But the use of BNN priors means they can't exploit richly structured GP priors or other stochastic processes.

\paragraph{Scalable Gaussian Processes.} 
Gaussian processes are difficult to apply exactly to large datasets since the computational requirements scale as $O(N^3)$ time, and as $O(N^2)$ memory, where $N$ is the number of training cases. Multiple approaches have been proposed to reduce the computational complexity. However, sparse GP methods \citep{quia2010sparse, snelson2006sparse, titsias2009variational, hensman2013gaussian, hensman2015scalable, krauth2016autogp} still suffer for very large dataset, while random feature methods \citep{rahimi2008random, le2013fastfood} and KISS-GP \citep{wilson2015kernel, izmailov2017scalable} must be hand-tailored to a given kernel.

\section{Experiments}
\label{sec:exp}

Our experiments had two main aims: (1) to test the ability of fBNNs to extrapolate using various structural motifs, including both implicit and explicit priors, and (2) to test if they perform competitively with other BNNs on standard benchmark tasks such as regression and contextual bandits.

In all of our experiments, the variational posterior is represented as a stochastic neural network with independent Gaussian distributions over the weights, i.e.~$q(\weights) = \normal(\weights;\mean, \diag(\stdVec^2))$.\footnote{
One could also use stochastic activations, but we did not find that this gave any improvement.}
We always used the ReLU activation function unless otherwise specified. Measurement points were sampled uniformly from a rectangle containing the training inputs. More precisely, each coordinate was sampled from the interval $[x_{\rm min} - \tfrac{d}{2}, x_{\rm max} + \tfrac{d}{2}]$, where $x_{\rm min}$ and $x_{\rm max}$ are the minimum and maximum input values along that coordinate, and $d = x_{\rm max} - x_{\rm min}$. For experiments where we used GP priors, we first fit the GP hyperparameters to maximize the marginal likelihood on subsets of the training examples, and then fixed those hyperparameters to obtain the prior for the fBNNs.

\subsection{Extrapolation Using Structured Priors}

Making sensible predictions outside the range of the observed data requires exploiting the underlying structure. In this section, we consider some illustrative examples where fBNNs are able to use structured priors to make sensible extrapolations. \Cref{app:subsec:time} also shows the extrapolation of fBNNs for a time-series problem.

\subsubsection{Learning Periodic Structures}

\begin{figure}[t]
\vspace{-2em}
\centering
\subfigure[BBB-$1$] {  
\includegraphics[width=0.161\textwidth]{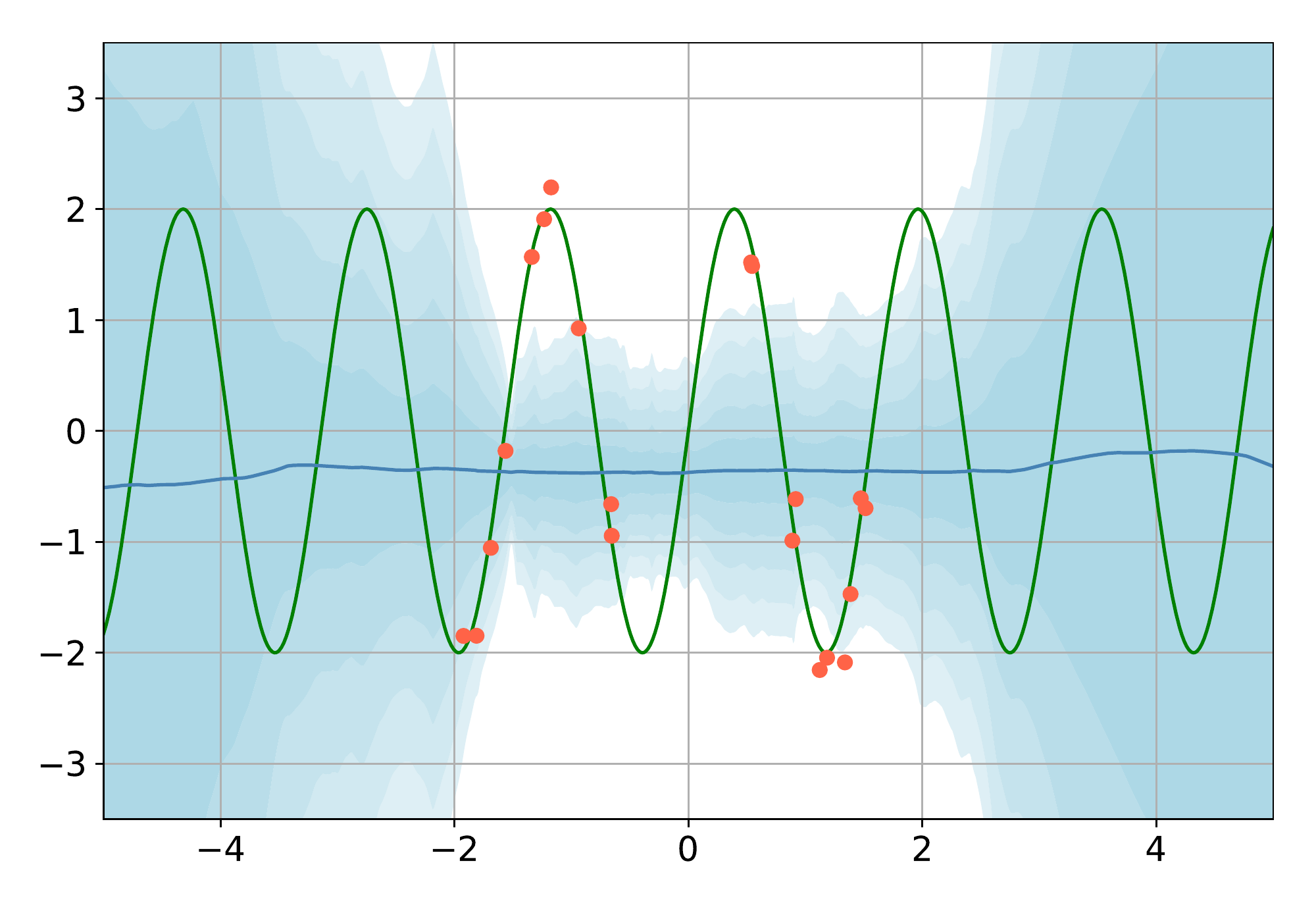} 
} 
\hspace{-1em}
\subfigure[BBB-$0.001$] {  
\includegraphics[width=0.161\textwidth]{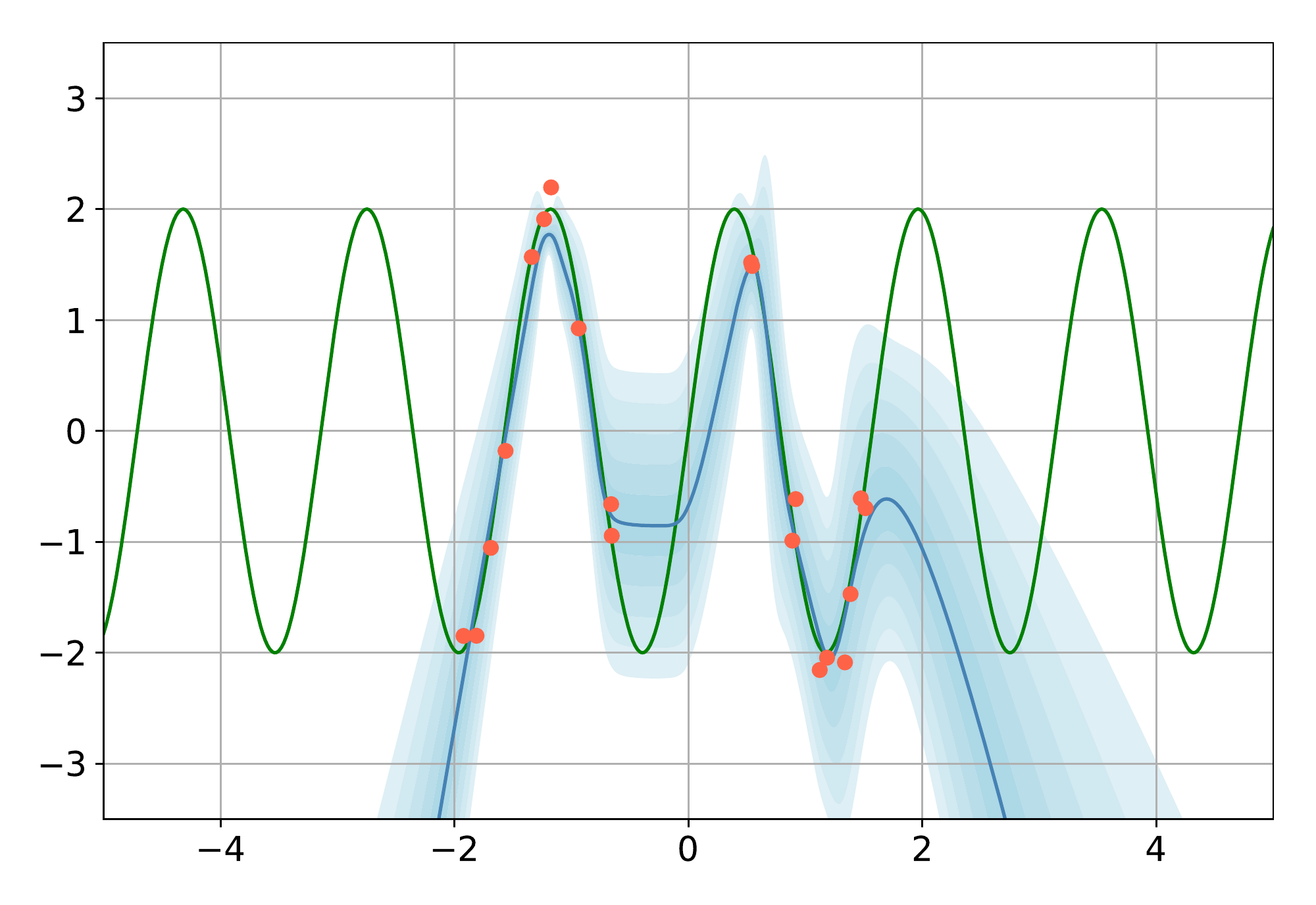} 
} 
\hspace{-1em}
\subfigure[FBNN-RBF] { 
\includegraphics[width=0.161\textwidth]{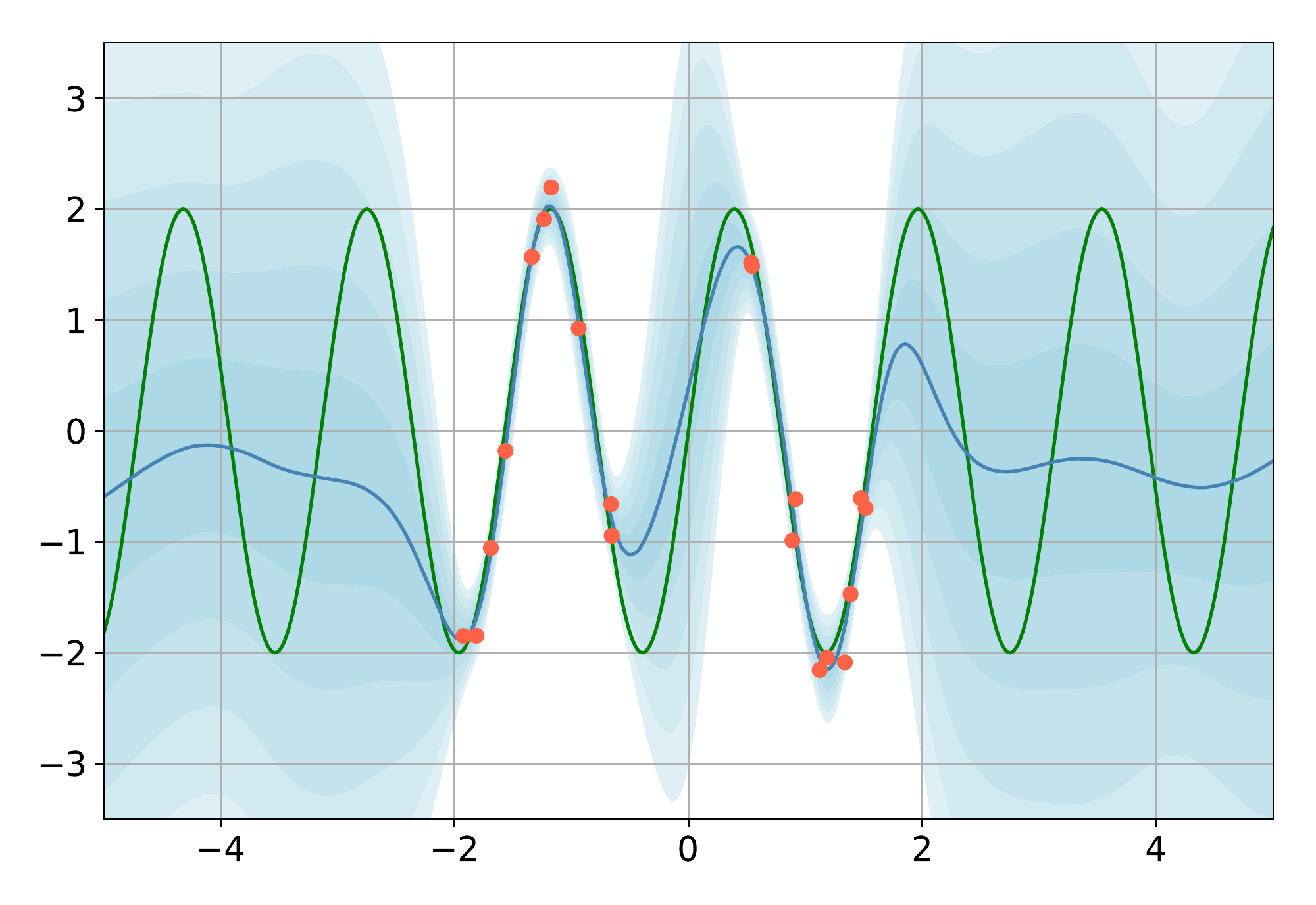} 
}
\hspace{-1em}
\subfigure[GP-RBF] { 
\includegraphics[width=0.161\textwidth]{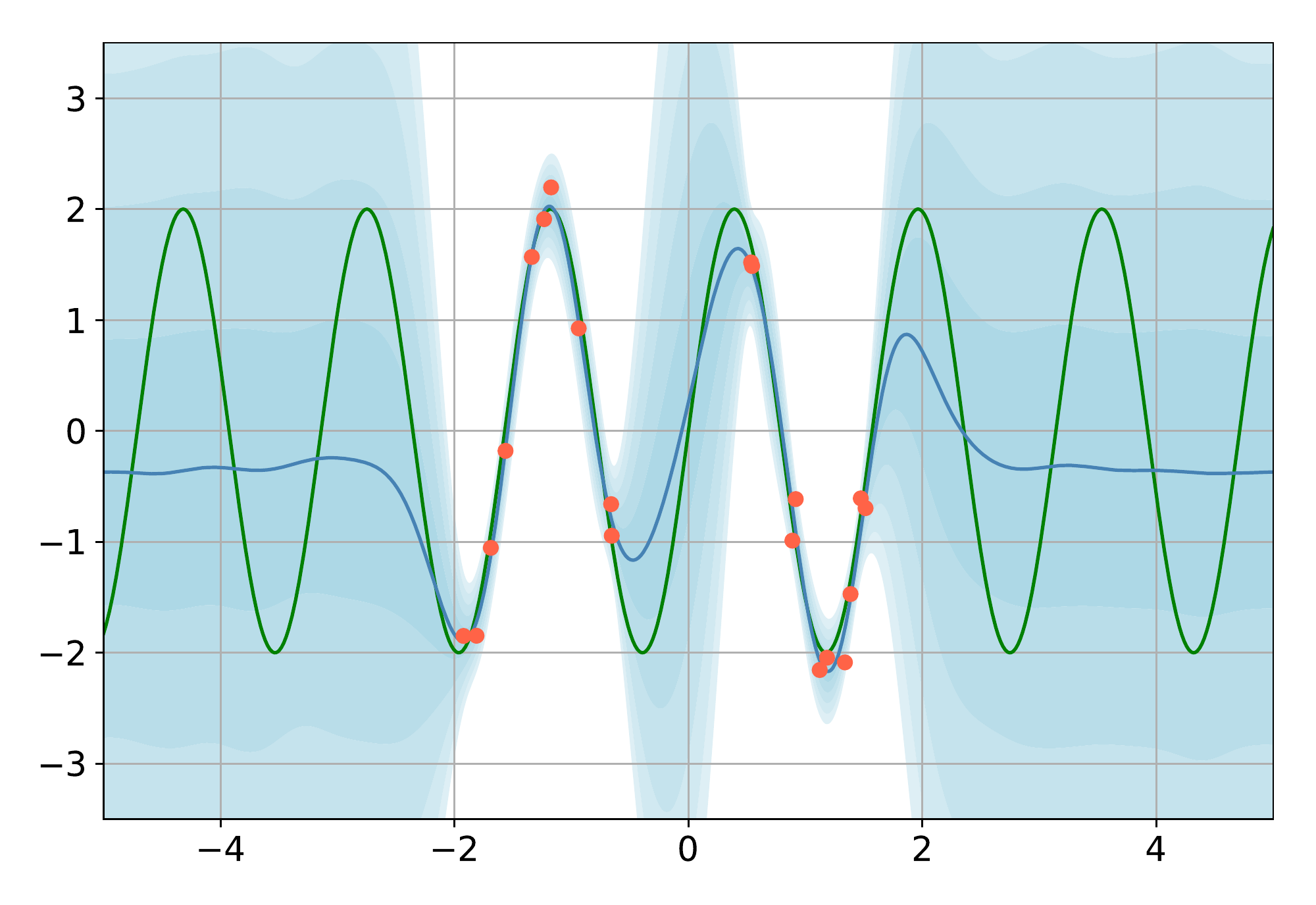} 
}
\hspace{-1em}
\subfigure[FBNN-PER] { 
\includegraphics[width=0.161\textwidth]{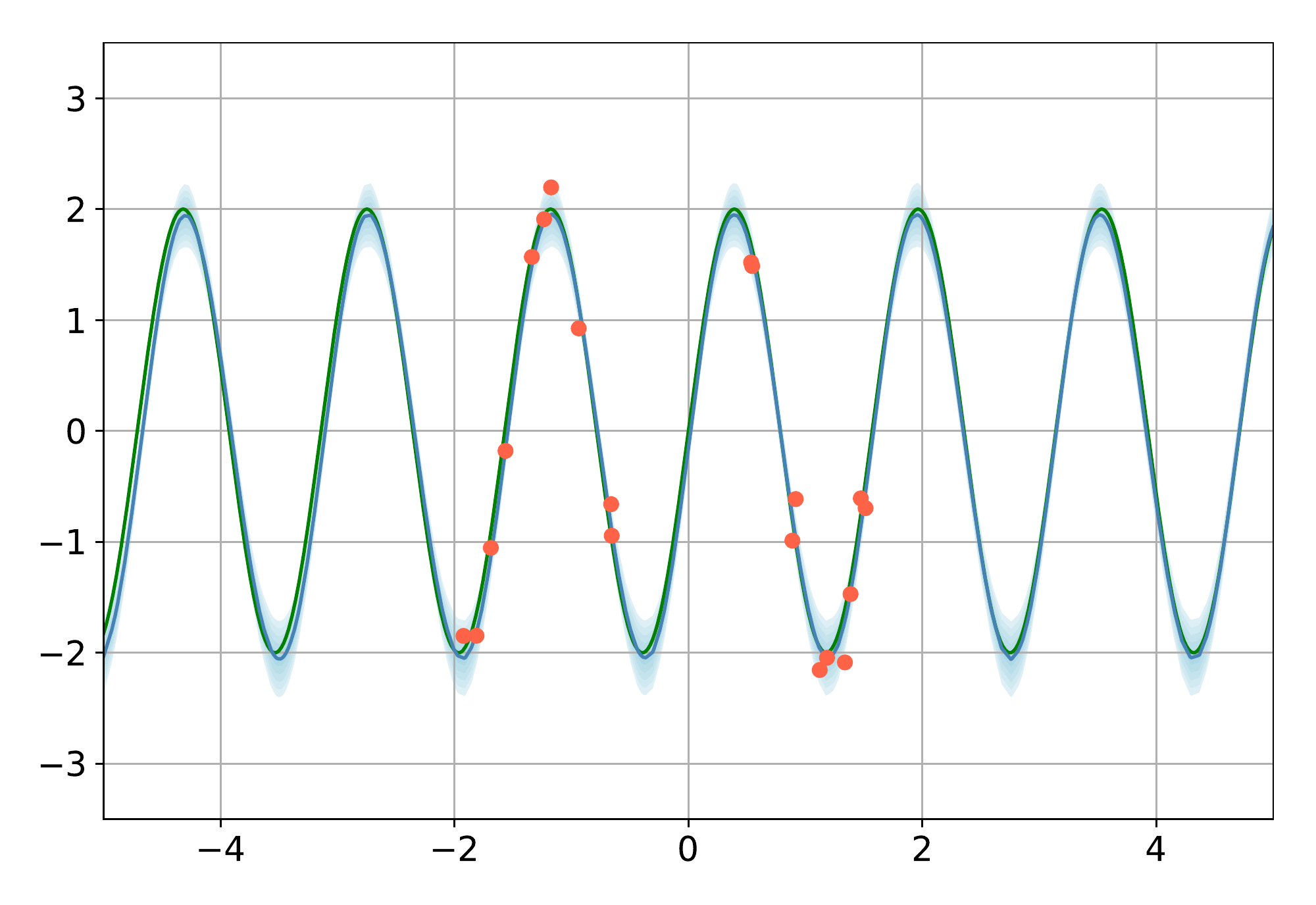} 
}
\hspace{-1em}
\subfigure[GP-PER] { 
\includegraphics[width=0.161\textwidth]{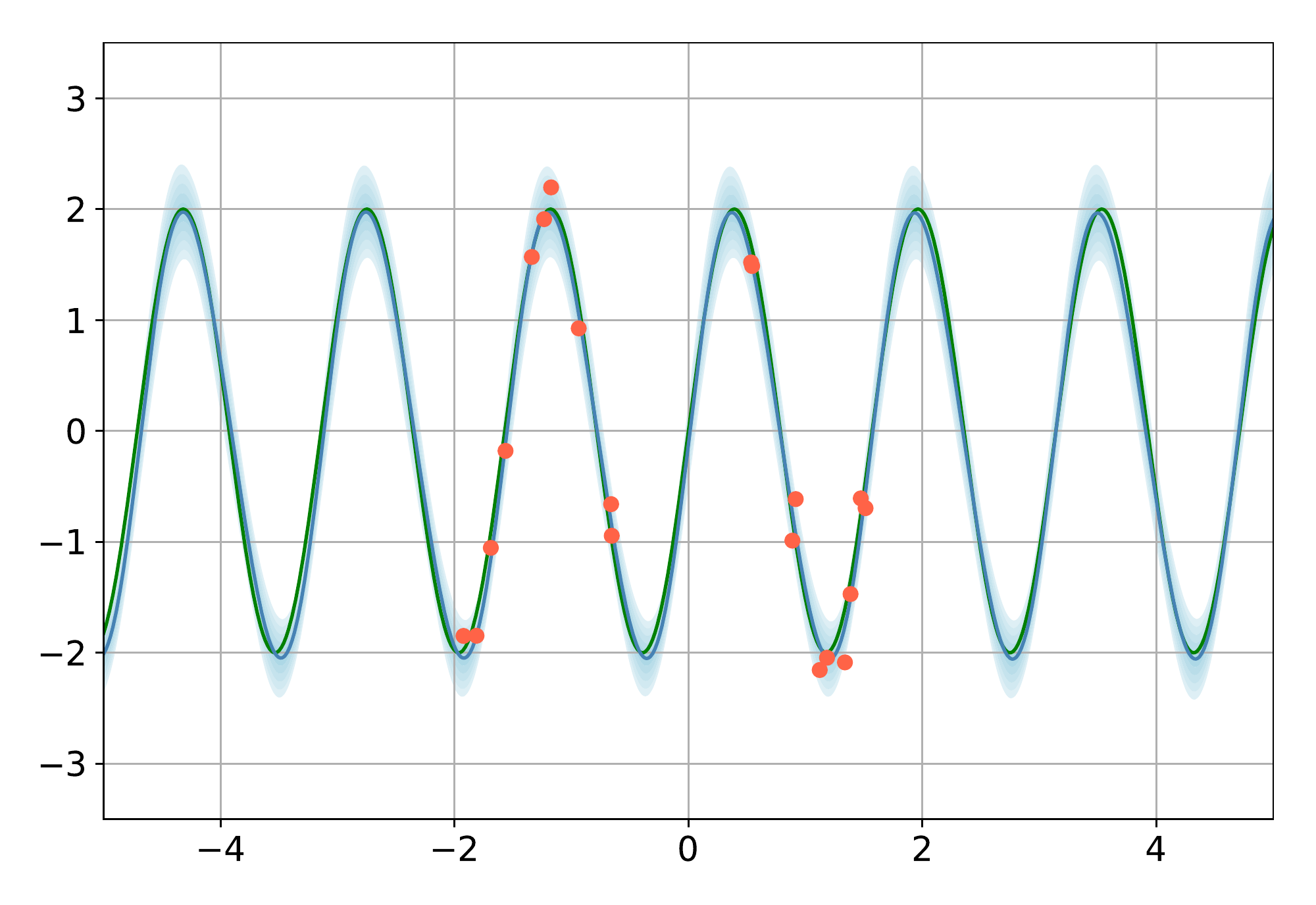} 
}\vspace{-0.5em}
\caption{Extrapolating periodic structure. Red dots denote 20 training points. The green and blue lines represent ground truth and mean prediction, respectively. Shaded areas correspond to standard deviations. We considered GP priors with two kernels: RBF (which does not model the periodic structure), and $\text{PER}+\text{RBF}$ (which does). In each case, the fBNN makes similar predictions to the exact GP. In contrast, the standard BBB (BBB-$1$) cannot even fit the training data, while BBB with scaling down KL by $0.001$ (BBB-$0.001$) manages to fit training data, but fails to provide sensible extrapolations.}
\label{fig:toy-sin}
\vspace{-1em}
\end{figure}

Gaussian processes can model periodic structure using a periodic kernel plus a RBF kernel:
\begin{equation}\label{eq:periodic-kernel}
    k(x, x') = \sigma_1^2\exp\left\{-\frac{2\sin^2(\pi |x - x'|/p)}{l_1^2}\right\} + \sigma_2^2\exp \left(-\frac{(x-x')^2}{2l_2^2} \right)
\end{equation}
where $p$ is the period.
In this experiment, we consider 20 inputs randomly sampled from the interval $[-2, -0.5] \cup [0.5, 2]$, and targets $y$ which are noisy observations of a periodic function: $y = 2 * \sin(4x) + \epsilon$ with $\epsilon \sim \normal (0, 0.04)$. We compared our method with Bayes By Backprop (BBB)~\citep{blundell2015weight} (with a spherical Gaussian prior on $\bw$) and Gaussian Processes. For fBNNs and GPs, we considered both a single RBF kernel (which does not capture the periodic structure) and $\text{PER}+\text{RBF}$ as in \cref{eq:periodic-kernel} (which does).\footnote{Details: we used a BNN with five hidden layers, each with 500 units. 
The inputs and targets were normalized to have zero mean and unit variance. For all methods, the observation noise variance was set to the true value. We used the trained GP as the prior of our fBNNs. 
In each iteration, measurement points included all training examples, plus 40 points randomly sampled from $[-5, 5]$. We used a training budget of 80,000 iterations, and annealed the weighting factor of the KL term linearly from 0 to 1 for the first 50,000 iterations.}

As shown in Fig.~\ref{fig:toy-sin}, BBB failed to fit the training data, let alone recover the periodic pattern (since its prior does not encode any periodic structure). For this example, we view the GP with $\text{PER}+\text{RBF}$ as the gold standard, since its kernel structure is designed to model periodic functions. Reassuringly, the fBNNs made very similar predictions to the GPs with the corresponding kernels, though they predicted slightly smaller uncertainty. We emphasize that the extrapolation results from the functional prior, rather than the network architecture, which does not encode periodicity, and which is not well suited to model smooth functions due to the ReLU activation function.

\subsubsection{Implicit Priors}

Because the KL term in the fELBO is estimated using the SSGE, an implicit variational inference algorithm (as discussed in Section~\ref{subsub:ssge}), the functional prior need not have a tractable marginal density. In this section, we examine approximate posterior samples and marginals for two implicit priors: a distribution over piecewise constant functions, and a distribution over piecewise linear functions. Prior samples are shown in Figure~\ref{fig:piece-wise}; see \Cref{app:implicit-prior} for the precise definitions. In each run of the experiment, we first sampled a random function from the prior, and then sampled $20$ points from $[0, 0.2]$ and another $20$ points from $[0.8, 1]$, giving a training set of 40 data points. To make the task more difficult for the fBNN, we used the tanh activation function, which is not well suited for piecewise constant or piecewise linear functions.\footnote{Details: the standard deviation of observation noise was chosen to be 0.02. In each iteration, we took all training examples, together with 40 points randomly sampled from $[0, 1]$]. We used a fully connected network with 2 hidden layers of 100 units, and tanh activations. The network was trained for 20,000 iterations.} 

Posterior predictive samples and marginals are shown for three different runs in \Cref{fig:piece-wise}.
We observe that fBNNs made predictions with roughly piecewise constant or piecewise linear structure, although their posterior samples did not seem to capture the full diversity of possible explanations of the data. Even though the tanh activation function encourages smoothness, the network learned to generate functions with sharp transitions.

\begin{figure}[t]
\centering
\includegraphics[width=0.90\textwidth]{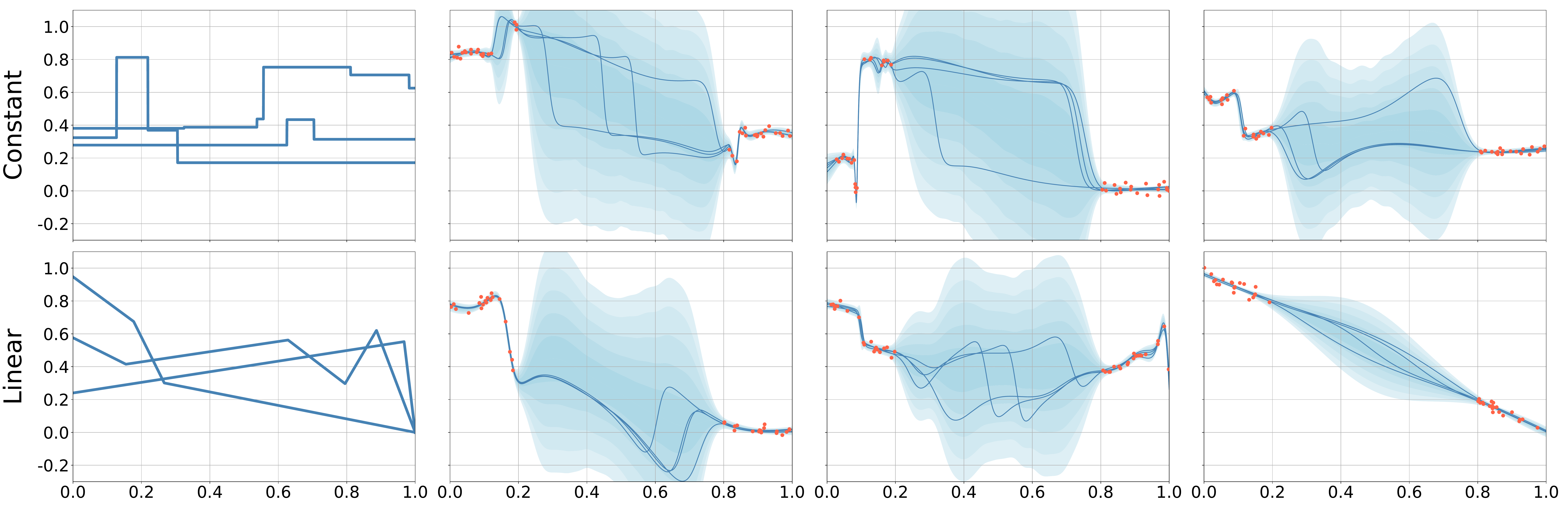} 
\vspace{-0.6em}
\caption{Implicit function priors and fBNN approximate posteriors. The leftmost column shows 3 prior samples. The other three columns show independent runs of the experiment. The red dots denote 40 training samples. We plot 4 posterior samples and show multiples of the predictive standard derivation as shaded areas.} 
\label{fig:piece-wise}
\end{figure}

\subsection{Predictive Performance}

\subsubsection{Small Scale Datasets}

Following previous work~\citep{hernandez2015probabilistic}, we then experimented with standard regression benchmark datasets from the UCI collection~\citep{asuncion2007uci}.
In particular, we only used the datasets with less than 2000 data points so that we could fit GP hyperparameters by maximizing marginal likelihood exactly.
Each dataset was randomly split into training and test sets, comprising 90\% and 10\% of the data respectively. 
This splitting process was repeated 10 times to reduce variability.\footnote{Details: For all datasets, we used networks with one hidden layer of 50 hidden units.
We first fit GP hyper-parameters using marginal likelihood with a budget of 10,000 iterations. We then trained the observation variance and kept it lower bounded by GP observation variance. 
FBNNs were trained for 2,000 epochs.
And in each iteration, measurement points included 20 training examples, plus 5 points randomly sampled.} 

\begin{table}[t]
\caption{Averaged test RMSE and log-likelihood for the regression benchmarks.}
\vspace{-1em}
\begin{center}
\resizebox{0.87\textwidth}{!}{
\begin{tabular}{lcccccc}
\toprule
\textbf{}        & \multicolumn{3}{c}{Test RMSE} & \multicolumn{3}{c}{Test log-likelihood} \\
\textbf{Dataset} & BBB & Noisy K-FAC & FBNN & BBB & Noisy K-FAC & FBNN \\
\midrule
Boston  		 & 3.171$\pm$0.149 & 2.742$\pm$0.125 & \textbf{2.378$\pm$0.104} & -2.602$\pm$0.031 & -2.446$\pm$0.029  & \textbf{-2.301$\pm$0.038} \\
Concrete         & 5.678$\pm$0.087 & 5.019$\pm$0.127 & \textbf{4.935$\pm$0.180} & -3.149$\pm$0.018 & \textbf{-3.039$\pm$0.025}  & -3.096$\pm$0.016 \\
Energy           & 0.565$\pm$0.018 & 0.485$\pm$0.023 & \textbf{0.412$\pm$0.017} & -1.500$\pm$0.006 & -1.421$\pm$0.005 & \textbf{-0.684$\pm$0.020} \\
Wine             & 0.643$\pm$0.012 & \textbf{0.637$\pm$0.011} & 0.673$\pm$0.014 & -0.977$\pm$0.017 & \textbf{-0.969$\pm$0.014} & -1.040$\pm$0.013\\
Yacht            & 1.174$\pm$0.086 & 0.979$\pm$0.077 & \textbf{0.607$\pm$0.068} & -2.408$\pm$0.007 & -2.316$\pm$0.006 & \textbf{-1.033$\pm$0.033} \\
\bottomrule
\end{tabular}
}
\end{center}
\label{tab:uci-regression}
\vspace{-1em}
\end{table}

We compared our fBNNs with Bayes By Backprop (BBB)~\citep{blundell2015weight} and Noisy K-FAC ~\citep{zhang2017noisy}.
In accordance with~\citet{zhang2017noisy}, we report root mean square error (RMSE) and test log-likelihood.
The results are shown in \Cref{tab:uci-regression}. On most datasets, our fBNNs outperformed both BBB and NNG, sometimes by a significant margin.

\subsubsection{Large Scale Datasets}\label{exp:reg:large}

Observe that fBNNs are naturally scalable to large datasets because they access the data only through the expected log-likelihood term, which can be estimated stochastically. In this section, we verify this experimentally. We compared fBNNs and BBB with large scale UCI datasets, including \textit{Naval}, \textit{Protein Structures}, \textit{Video Transcoding (Memory, Time)} and \textit{GPU kernel performance}. We randomly split the datasets into 80\% training, 10\% validation, and 10\% test. We used the validating set to select the hyperparameters and performed early stopping.

\begin{table}[h]

\label{uci regression}
\caption{Averaged test RMSE and log-likelihood for the regression benchmarks.}\label{tab:large-scale}
\vspace{-1em}
\small
\begin{center}
\resizebox{0.8\textwidth}{!}{
\begin{tabular}{lccccc}
\toprule
\textbf{}     &    & \multicolumn{2}{c}{Test RMSE} & \multicolumn{2}{c}{Test log-likelihood} \\
\textbf{Dataset} & N                 & BBB          & FBNN                     & BBB          & FBNN \\
\midrule 
Naval      & 11934 & 1.6E-4$\pm$0.000 & \textbf{1.2E-4$\pm$0.000} & 6.950$\pm$0.052 & \textbf{7.130$\pm$0.024} \\
Protein     & 45730       & 4.331$\pm$0.033  & \textbf{4.326$\pm$0.019} & -2.892$\pm$0.007 & \textbf{-2.892$\pm$0.004} \\
Video Memory  & 68784      & 1.879$\pm$0.265  & \textbf{1.858$\pm$0.036}  & \textbf{-1.999$\pm$0.054} & -2.038$\pm$0.021          \\
Video Time & 68784       & 3.632$\pm$1.974  & \textbf{3.007$\pm$0.127}  & \textbf{-2.390$\pm$0.040} & -2.471$\pm$0.018          \\
GPU         & 241600      & 21.886$\pm$0.673 & \textbf{19.50$\pm$0.171}  & -4.505$\pm$0.031 & \textbf{-4.400$\pm$0.009} \\
\bottomrule
\end{tabular}
}
\end{center}
\end{table}

Both methods were trained for 80,000 iterations.\footnote{We tune the learning rate from $[0.001, 0.01]$. We tuned between not annealing the learning rate or annealing it by $0.1$ at 40000 iterations. We evaluated the validating set in each epoch, and selected the epoch for testing based on the validation performance. To control overfitting, we used $\text{Gamma}(6.,6.)$ prior following \citep{hernandez2015probabilistic} for modelling observation precision and perform inference.} We used 1 hidden layer with 100 hidden units for all datasets. For the prior of fBNNs, we used a GP with Neural Kernel Network (NKN) kernels as used in \citet{sun2018differentiable}. We note that GP hyperparameters were fit using mini-batches of size 1000 with 10000 iterations. In each iteration, measurement sets consist of 500 training samples and 5 or 50 points from the sampling distribution $c$, tuned by validation performance. We ran each experiment 5 times, and report the mean and standard deviation in \Cref{tab:large-scale}. More large scale regression results with bigger networks can be found at \Cref{app:subsec:depth} and \Cref{app:subsec:deep}.

\subsection{Contextual Bandits}

\begin{table}[h]
\caption{Contextual bandits regret. Results are relative to the cumulative regret of the Uniform algorithm. Numbers after the algorithm are the network sizes. We report the mean and standard derivation over 10 trials.}
\vspace{-1em}
\label{tab:bandits}
\begin{center}
\resizebox{\textwidth}{!}{
\begin{sc}
\begin{tabular}{lllllllllll}
                        & M. Rank & M. Value & MushRoom                & Statlog                & Covertype               & Financial              & Jester                  & Adult                             \\ \hline
FBNN $1\times 50$       & 4.7   & 41.9 & 21.38$\; \pm \;$7.00          & 8.85$\; \pm \;$4.55          & 47.16$\; \pm \;$2.39           & 9.90$\; \pm \;$2.40          & 75.55$\; \pm \;$5.51          & \textbf{88.43$\; \pm \;$1.95} &      \\
FBNN $2\times 50$       & 6.5    & 43.0 & 24.57$\; \pm \;$10.81         & 10.08$\; \pm \;$5.66         & 49.04$\; \pm \;$3.75          & 11.83$\; \pm \;$2.95         & 73.85$\; \pm \;$6.82          & 88.81$\; \pm \;$3.29          &       \\
FBNN $3\times 50$       & 7    & 45.0 & 34.03$\; \pm \;$13.95         & 7.73$\; \pm \;$4.37          & 50.14$\; \pm \;$3.13          & 14.14$\; \pm \;$1.99         & 74.27$\; \pm \;$6.54          & 89.68$\; \pm \;$1.66                 \\
FBNN $1\times 500$      &\textbf{3.8} & 41.3 & 21.90$\; \pm \;$9.95          & 6.50$\; \pm \;$2.97          & 47.45$\; \pm \;$1.86          & \textbf{7.83$\; \pm \;$0.77} & 74.81$\; \pm \;$5.57          & 89.03$\; \pm \;$1.78               \\
FBNN $2\times 500$      &4.2 & 41.2 & 23.93$\; \pm \;$11.59         & 7.98$\; \pm \;$3.08          & 46.00$\; \pm \;$2.01          & 10.67$\; \pm \;$3.52         & \textbf{68.88$\; \pm \;$7.09} & 89.70$\; \pm \;$2.01              \\
FBNN $3\times 500$       &4.2 & \textbf{40.9} & 19.07$\; \pm \;$4.97          & 10.04$\; \pm \;$5.09         & \textbf{45.24$\; \pm \;$2.11} & 11.48$\; \pm \;$2.20         & 69.42$\; \pm \;$7.56          & 90.01$\; \pm \;$1.70             \\
MultitaskGP             &4.3  & 41.7 & 20.75$\; \pm \;$2.08          & 7.25$\; \pm \;$1.80          & 48.37$\; \pm \;$3.50          & 8.07$\; \pm \;$1.13          & 76.99$\; \pm \;$6.01          & 88.64$\; \pm \;$3.20             \\
BBB $1\times 50$        &10.8   & 52.7 & 24.41$\; \pm \;$6.70          & 25.67$\; \pm \;$3.46         & 58.25$\; \pm \;$5.00          & 37.69$\; \pm \;$15.34        & 75.39$\; \pm \;$6.32          & 95.07$\; \pm \;$1.57           \\
BBB $1\times 500$       &13.7 & 66.2 & 26.41$\; \pm \;$8.71          & 51.29$\; \pm \;$11.27        & 83.91$\; \pm \;$4.62          & 57.20$\; \pm \;$7.19         & 78.94$\; \pm \;$4.98          & 99.21$\; \pm \;$0.79            \\
BBAlphaDiv              &15   & 83.8 & 61.00$\; \pm \;$6.47          & 70.91$\; \pm \;$10.22        & 97.63$\; \pm \;$3.21          & 85.94$\; \pm \;$4.88         & 87.80$\; \pm \;$5.08          & 99.60$\; \pm \;$1.06               \\
ParamNoise              &10 & 47.9 & 20.33$\; \pm \;$13.12         & 13.27$\; \pm \;$2.85         & 65.07$\; \pm \;$3.47          & 17.63$\; \pm \;$4.27         & 74.94$\; \pm \;$7.24          & 95.90$\; \pm \;$2.20                \\
NeuralLinear            &10.8 & 48.8 & 16.56$\; \pm \;$11.60         & 13.96$\; \pm \;$1.51         & 64.96$\; \pm \;$2.54          & 18.57$\; \pm \;$2.02         & 82.14$\; \pm \;$3.64          & 96.87$\; \pm \;$0.92             \\
LinFullPost             &8.3   & 46.0  & 14.71$\; \pm \;$0.67          & 19.24$\; \pm \;$0.77         & 58.69$\; \pm \;$1.17          & 10.69$\; \pm \;$0.92          & 77.76$\; \pm \;$5.67          & 95.00$\; \pm \;$1.26        \\
Dropout                 &5.5  & 41.7 & \textbf{12.53$\; \pm \;$1.82} & 12.01$\; \pm \;$6.11         & 48.95$\; \pm \;$2.19          & 14.64$\; \pm \;$3.95         & 71.38$\; \pm \;$7.11          & 90.62$\; \pm \;$2.21               \\
RMS                     &6.5  & 43.9 & 15.29$\; \pm \;$3.06          & 11.38$\; \pm \;$5.63         & 58.96$\; \pm \;$4.97          & 10.46$\; \pm \;$1.61         & 72.09$\; \pm \;$6.98          & 95.29$\; \pm \;$1.50                \\
BootRMS                 &4.7    & 42.6 & 18.05$\; \pm \;$11.20         & \textbf{6.13$\; \pm \;$1.03} & 53.63$\; \pm \;$2.15          & 8.69$\; \pm \;$1.30          & 74.71$\; \pm \;$6.00          & 94.18$\; \pm \;$1.94             \\  
Uniform                 &16  & 100  & 100.0$\; \pm \;$0.0               & 100.0$\; \pm \;$0.0                  & 100.0$\; \pm \;$0.0                   & 100.0$\; \pm \;$0.0                  & 100.0$\; \pm \;$0.0                   & 100.0$\; \pm \;$0.0                      \\ \hline
\end{tabular}
\end{sc}
}
\end{center}
\vspace{-1em}
\end{table}

One of the most important applications of uncertainty modelling is to guide exploration in settings such as bandits, Bayesian optimization (BO), and reinforcement learning. In this section, we evaluate fBNNs on a recently introduced contextual bandits benchmark~\citep{riquelme2018deep}. In contextual bandits problems, the agent tries to select the action with highest reward given some input context. Because the agent learns about the model gradually, it should balance between exploration and exploitation to maximize the cumulative reward. Thompson sampling~\citep{thompson1933likelihood} is one promising approach which repeatedly samples from the posterior distribution over parameters, choosing the optimal action according to the posterior sample.

We compared our fBNNs with the algorithms benchmarked in~\citep{riquelme2018deep}. We ran the experiments for all algorithms and tasks using the default settings open sourced by \citet{riquelme2018deep}. For fBNNs, we kept the same settings, including batchsize (512), training epochs (100) and training frequency (50). For the prior, we use the multi-task GP of \citet{riquelme2018deep}.  Measurement sets consisted of training batches, combined with 10 points sampled from data regions. We ran each experiment 10 times; the mean and standard derivation are reported in \Cref{tab:bandits} (\Cref{app:subsec:contextual-bandit} has the full results for all experiments.). Similarly to \citet{riquelme2018deep}, we also report the mean rank and mean regret.

As shown in \Cref{tab:bandits}, fBNNs outperformed other methods by a wide margin. Additionally, fBNNs maintained consistent performance even with deeper and wider networks. By comparison, BBB suffered significant performance degradation when the hidden size was increased from 50 to 500. This is consistent with our hypothesis that functional variational inference can gracefully handle networks with high capacity. 

\subsection{Bayesian Optimization}

Another domain where efficient exploration requires accurate uncertainty modeling is Bayesian optimization. Our experiments with Bayesian optimization are described in App~\ref{app:bo}. We compared BBB, RBF Random Feature \citep{rahimi2008random} and our fBNNs in the context of Max-value Entropy Search (MES) \citep{wang2017max}, which requires explicit function samples for Bayesian Optimization. We performed BO over functions sampled from Gaussian Processes corresponding to RBF, Matern12 and ArcCosine kernels, and found our fBNNs achieved comparable or better performance than RBF Random Feature.

\section{Conclusions}
In this paper we investigated variational inference between stochastic processes. We proved that the KL divergence between stochastic processes equals the supremum of KL divergence for marginal distributions over all finite measurement sets. Then we presented two practical functional variational inference approaches: adversarial and sampling-based. Adopting BNNs as the variational posterior yields our 
functional variational Bayesian neural networks. Empirically, we demonstrated that fBNNs extrapolate well over various structures, estimate reliable uncertainties, and scale to large datasets.

\section*{Acknowledgements}\label{par:acknowledgements}
We thank Ricky Chen, Kevin Luk and Xuechen Li for their helpful comments on this project. SS was supported by a Connaught New Researcher Award and a Connaught Fellowship. GZ was supported by an MRIS Early Researcher Award. RG acknowledges funding from the CIFAR Canadian AI Chairs program.

\bibliography{references.bib}

\begin{thebibliography}{64}
\providecommand{\natexlab}[1]{#1}
\providecommand{\url}[1]{\texttt{#1}}
\expandafter\ifx\csname urlstyle\endcsname\relax
  \providecommand{\doi}[1]{doi: #1}\else
  \providecommand{\doi}{doi: \begingroup \urlstyle{rm}\Url}\fi

\bibitem[Arjovsky \& Bottou(2017)Arjovsky and Bottou]{arjovsky2017towards}
Martin Arjovsky and L{\'e}on Bottou.
\newblock Towards principled methods for training generative adversarial
  networks.
\newblock \emph{arXiv preprint arXiv:1701.04862}, 2017.

\bibitem[Asuncion \& Newman(2007)Asuncion and Newman]{asuncion2007uci}
Arthur Asuncion and David Newman.
\newblock {UCI} machine learning repository, 2007.

\bibitem[Bae et~al.(2018)Bae, Zhang, and Grosse]{bae2018eigenvalue}
Juhan Bae, Guodong Zhang, and Roger Grosse.
\newblock Eigenvalue corrected noisy natural gradient.
\newblock \emph{arXiv preprint arXiv:1811.12565}, 2018.

\bibitem[Baker(1997)]{baker1997numerical}
Christopher~T. Baker.
\newblock \emph{The Numerical Treatment of Integral Equations}.
\newblock Clarendon Press, Oxford, 1997.

\bibitem[Blundell et~al.(2015)Blundell, Cornebise, Kavukcuoglu, and
  Wierstra]{blundell2015weight}
Charles Blundell, Julien Cornebise, Koray Kavukcuoglu, and Daan Wierstra.
\newblock Weight uncertainty in neural network.
\newblock In \emph{International Conference on Machine Learning}, pp.\
  1613--1622, 2015.

\bibitem[Cutajar et~al.(2017)Cutajar, Bonilla, Michiardi, and
  Filippone]{cutajar2016random}
Kurt Cutajar, Edwin~V Bonilla, Pietro Michiardi, and Maurizio Filippone.
\newblock Random feature expansions for deep {G}aussian processes.
\newblock In \emph{International Conference on Machine Learning}, pp.\
  884--893, 2017.

\bibitem[Depeweg et~al.(2017)Depeweg, Hern{\'a}ndez-Lobato, Doshi-Velez, and
  Udluft]{depeweg2017uncertainty}
Stefan Depeweg, Jos{\'e}~Miguel Hern{\'a}ndez-Lobato, Finale Doshi-Velez, and
  Steffen Udluft.
\newblock Uncertainty decomposition in {B}ayesian neural networks with latent
  variables.
\newblock \emph{arXiv preprint arXiv:1706.08495}, 2017.

\bibitem[Eldredge(2016)]{eldredge2016analysis}
Nathaniel Eldredge.
\newblock Analysis and probability on infinite-dimensional spaces.
\newblock \emph{arXiv preprint arXiv:1607.03591}, 2016.

\bibitem[Flam-Shepherd et~al.(2017)Flam-Shepherd, Requeima, and
  Duvenaud]{flam2017mapping}
Daniel Flam-Shepherd, James Requeima, and David Duvenaud.
\newblock Mapping {G}aussian process priors to {B}ayesian neural networks.
\newblock In \emph{NIPS Bayesian deep learning workshop}, 2017.

\bibitem[Folland(2013)]{folland2013real}
Gerald~B Folland.
\newblock \emph{Real analysis: modern techniques and their applications}.
\newblock John Wiley \& Sons, 2013.

\bibitem[Gal \& Ghahramani(2016)Gal and Ghahramani]{gal2016dropout}
Yarin Gal and Zoubin Ghahramani.
\newblock Dropout as a {B}ayesian approximation: Representing model uncertainty
  in deep learning.
\newblock In \emph{International Conference on Machine Learning}, pp.\
  1050--1059, 2016.

\bibitem[Gal et~al.(2017)Gal, Hron, and Kendall]{gal2017concrete}
Yarin Gal, Jiri Hron, and Alex Kendall.
\newblock Concrete dropout.
\newblock In \emph{Advances in Neural Information Processing Systems}, pp.\
  3581--3590, 2017.

\bibitem[{Garnelo} et~al.(2018){Garnelo}, {Schwarz}, {Rosenbaum}, {Viola},
  {Rezende}, {Eslami}, and {Whye Teh}]{2018arXiv180701622G}
M.~{Garnelo}, J.~{Schwarz}, D.~{Rosenbaum}, F.~{Viola}, D.~J. {Rezende},
  S.~M.~A. {Eslami}, and Y.~{Whye Teh}.
\newblock Neural processes.
\newblock \emph{arXiv preprint arXiv:1807.01622}, 2018.

\bibitem[Ghosh et~al.(2018)Ghosh, Yao, and Doshi-Velez]{ghosh2018structured}
Soumya Ghosh, Jiayu Yao, and Finale Doshi-Velez.
\newblock Structured variational learning of {B}ayesian neural networks with
  horseshoe priors.
\newblock In \emph{International Conference on Machine Learning}, pp.\
  1744--1753, 2018.

\bibitem[Goodfellow et~al.(2014)Goodfellow, Pouget-Abadie, Mirza, Xu,
  Warde-Farley, Ozair, Courville, and Bengio]{goodfellow2014generative}
Ian Goodfellow, Jean Pouget-Abadie, Mehdi Mirza, Bing Xu, David Warde-Farley,
  Sherjil Ozair, Aaron Courville, and Yoshua Bengio.
\newblock Generative adversarial nets.
\newblock In \emph{Advances in Neural Information Processing Systems}, pp.\
  2672--2680, 2014.

\bibitem[Graves(2011)]{graves2011practical}
Alex Graves.
\newblock Practical variational inference for neural networks.
\newblock In \emph{Advances in Neural Information Processing Systems}, pp.\
  2348--2356, 2011.

\bibitem[Gray(2011)]{graybook}
Robert~M. Gray.
\newblock \emph{Entropy and Infomation Theory}.
\newblock Springer, 2011.

\bibitem[Hafner et~al.(2018)Hafner, Tran, Irpan, Lillicrap, and
  Davidson]{hafner2018reliable}
Danijar Hafner, Dustin Tran, Alex Irpan, Timothy Lillicrap, and James Davidson.
\newblock Reliable uncertainty estimates in deep neural networks using noise
  contrastive priors.
\newblock \emph{arXiv preprint arXiv:1807.09289}, 2018.

\bibitem[Hensman et~al.(2013)Hensman, Fusi, and Lawrence]{hensman2013gaussian}
James Hensman, Nicolo Fusi, and Neil~D Lawrence.
\newblock Gaussian processes for big data.
\newblock In \emph{Uncertainty in Artificial Intelligence}, pp.\  282, 2013.

\bibitem[Hensman et~al.(2015)Hensman, Matthews, and
  Ghahramani]{hensman2015scalable}
James Hensman, Alexander Matthews, and Zoubin Ghahramani.
\newblock Scalable variational {G}aussian process classification.
\newblock In \emph{Artificial Intelligence and Statistics}, pp.\  351--360,
  2015.

\bibitem[Hern{\'a}ndez-Lobato \& Adams(2015)Hern{\'a}ndez-Lobato and
  Adams]{hernandez2015probabilistic}
Jos{\'e}~Miguel Hern{\'a}ndez-Lobato and Ryan Adams.
\newblock Probabilistic backpropagation for scalable learning of {B}ayesian
  neural networks.
\newblock In \emph{International Conference on Machine Learning}, pp.\
  1861--1869, 2015.

\bibitem[Hern{\'a}ndez-Lobato et~al.(2014)Hern{\'a}ndez-Lobato, Hoffman, and
  Ghahramani]{hernandez2014predictive}
Jos{\'e}~Miguel Hern{\'a}ndez-Lobato, Matthew~W Hoffman, and Zoubin Ghahramani.
\newblock Predictive entropy search for efficient global optimization of
  black-box functions.
\newblock In \emph{Advances in Neural Information Processing Systems}, pp.\
  918--926, 2014.

\bibitem[Hinton \& Van~Camp(1993)Hinton and Van~Camp]{hinton1993keeping}
Geoffrey~E Hinton and Drew Van~Camp.
\newblock Keeping the neural networks simple by minimizing the description
  length of the weights.
\newblock In \emph{Conference on Computational Learning Theory}, pp.\  5--13,
  1993.

\bibitem[Husz{\'a}r(2017)]{huszar2017variational}
Ferenc Husz{\'a}r.
\newblock Variational inference using implicit distributions.
\newblock \emph{arXiv preprint arXiv:1702.08235}, 2017.

\bibitem[Izmailov et~al.(2017)Izmailov, Novikov, and
  Kropotov]{izmailov2017scalable}
Pavel Izmailov, Alexander Novikov, and Dmitry Kropotov.
\newblock Scalable gaussian processes with billions of inducing inputs via
  tensor train decomposition.
\newblock \emph{arXiv preprint arXiv:1710.07324}, 2017.

\bibitem[Kingma \& Welling(2013)Kingma and Welling]{kingma2013auto}
Diederik~P Kingma and Max Welling.
\newblock Auto-encoding variational {B}ayes.
\newblock \emph{arXiv preprint arXiv:1312.6114}, 2013.

\bibitem[Kingma et~al.(2015)Kingma, Salimans, and
  Welling]{kingma2015variational}
Diederik~P Kingma, Tim Salimans, and Max Welling.
\newblock Variational dropout and the local reparameterization trick.
\newblock In \emph{Advances in Neural Information Processing Systems}, pp.\
  2575--2583, 2015.

\bibitem[Krauth et~al.(2016)Krauth, Bonilla, Cutajar, and
  Filippone]{krauth2016autogp}
Karl Krauth, Edwin~V Bonilla, Kurt Cutajar, and Maurizio Filippone.
\newblock Auto{GP}: Exploring the capabilities and limitations of {G}aussian
  process models.
\newblock \emph{arXiv preprint arXiv:1610.05392}, 2016.

\bibitem[Lamperti(2012)]{lamperti2012stochastic}
John Lamperti.
\newblock \emph{Stochastic processes: a survey of the mathematical theory},
  volume~23.
\newblock Springer Science \& Business Media, 2012.

\bibitem[L\'azaro-Gredilla et~al.(2010)L\'azaro-Gredilla, Qui\~nonero Candela,
  Rasmussen, and Figueiras-Vidal]{quia2010sparse}
Miguel L\'azaro-Gredilla, Joaquin Qui\~nonero Candela, Carl~Edward Rasmussen,
  and An\'ibal~R Figueiras-Vidal.
\newblock Sparse spectrum {Gaussian} process regression.
\newblock \emph{Journal of Machine Learning Research}, 11\penalty0
  (Jun):\penalty0 1865--1881, 2010.

\bibitem[Le et~al.(2013)Le, Sarl{\'o}s, and Smola]{le2013fastfood}
Quoc Le, Tam{\'a}s Sarl{\'o}s, and Alex Smola.
\newblock Fastfood --- approximating kernel expansions in loglinear time.
\newblock In \emph{International Conference on Machine Learning}, volume~85,
  2013.

\bibitem[Lee et~al.(2018)Lee, Sohl-dickstein, Pennington, Novak, Schoenholz,
  and Bahri]{lee2017deep}
Jaehoon Lee, Jascha Sohl-dickstein, Jeffrey Pennington, Roman Novak, Sam
  Schoenholz, and Yasaman Bahri.
\newblock Deep neural networks as {G}aussian processes.
\newblock In \emph{International Conference on Learning Representations}, 2018.

\bibitem[Liu et~al.(2016)Liu, Lee, and Jordan]{liu2016kernelized}
Qiang Liu, Jason Lee, and Michael Jordan.
\newblock A kernelized {S}tein discrepancy for goodness-of-fit tests.
\newblock In \emph{International Conference on Machine Learning}, pp.\
  276--284, 2016.

\bibitem[Lloyd et~al.(2014)Lloyd, Duvenaud, Grosse, Tenenbaum, and
  Ghahramani]{lloyd2014automatic}
James~Robert Lloyd, David~K Duvenaud, Roger~B Grosse, Joshua~B Tenenbaum, and
  Zoubin Ghahramani.
\newblock Automatic construction and natural-language description of
  nonparametric regression models.
\newblock In \emph{AAAI}, pp.\  1242--1250, 2014.

\bibitem[Louizos \& Welling(2016)Louizos and Welling]{louizos2016structured}
Christos Louizos and Max Welling.
\newblock Structured and efficient variational deep learning with matrix
  {G}aussian posteriors.
\newblock In \emph{International Conference on Machine Learning}, pp.\
  1708--1716, 2016.

\bibitem[Louizos \& Welling(2017)Louizos and
  Welling]{louizos2017multiplicative}
Christos Louizos and Max Welling.
\newblock Multiplicative normalizing flows for variational {B}ayesian neural
  networks.
\newblock In \emph{International Conference on Machine Learning}, pp.\
  2218--2227, 2017.

\bibitem[Louizos et~al.(2017)Louizos, Ullrich, and
  Welling]{louizos2017bayesian}
Christos Louizos, Karen Ullrich, and Max Welling.
\newblock Bayesian compression for deep learning.
\newblock In \emph{Advances in Neural Information Processing Systems}, pp.\
  3288--3298, 2017.

\bibitem[Ma et~al.(2018)Ma, Li, and Hern{\'a}ndez-Lobato]{ma2018variational}
Chao Ma, Yingzhen Li, and Jos{\'e}~Miguel Hern{\'a}ndez-Lobato.
\newblock Variational implicit processes.
\newblock \emph{arXiv preprint arXiv:1806.02390}, 2018.

\bibitem[Mallasto \& Feragen(2017)Mallasto and Feragen]{mallasto2017learning}
Anton Mallasto and Aasa Feragen.
\newblock Learning from uncertain curves: The 2-wasserstein metric for
  {G}aussian processes.
\newblock In \emph{Advances in Neural Information Processing Systems}, pp.\
  5660--5670, 2017.

\bibitem[Matthews et~al.(2016)Matthews, Hensman, Turner, and
  Ghahramani]{matthews2016sparse}
Alexander G de~G Matthews, James Hensman, Richard Turner, and Zoubin
  Ghahramani.
\newblock On sparse variational methods and the {K}ullback-{L}eibler divergence
  between stochastic processes.
\newblock In \emph{Artificial Intelligence and Statistics}, pp.\  231--239,
  2016.

\bibitem[Neal(1995)]{neal1995bayesian}
Radford~M Neal.
\newblock \emph{Bayesian Learning for Neural Networks}.
\newblock PhD thesis, University of Toronto, 1995.

\bibitem[{\O}ksendal(2003)]{oksendal2003stochastic}
Bernt {\O}ksendal.
\newblock Stochastic differential equations.
\newblock In \emph{Stochastic differential equations}, pp.\  65--84. Springer,
  2003.

\bibitem[Peterson(1987)]{peterson1987mean}
Carsten Peterson.
\newblock A mean field theory learning algorithm for neural networks.
\newblock \emph{Complex systems}, 1:\penalty0 995--1019, 1987.

\bibitem[Rahimi \& Recht(2008)Rahimi and Recht]{rahimi2008random}
Ali Rahimi and Benjamin Recht.
\newblock Random features for large-scale kernel machines.
\newblock In \emph{Advances in Neural Information Processing Systems}, pp.\
  1177--1184, 2008.

\bibitem[Rasmussen \& Ghahramani(2001)Rasmussen and
  Ghahramani]{rasmussen2001occam}
Carl~Edward Rasmussen and Zoubin Ghahramani.
\newblock Occam's razor.
\newblock In \emph{Advances in Neural Information Processing Systems}, pp.\
  294--300, 2001.

\bibitem[Rasmussen \& Williams(2006)Rasmussen and
  Williams]{rasmussen2006gaussian}
Carl~Edward Rasmussen and Christopher~KI Williams.
\newblock \emph{Gaussian Processes for Machine Learning}.
\newblock MIT Press, 2006.

\bibitem[Riquelme et~al.(2018)Riquelme, Tucker, and Snoek]{riquelme2018deep}
Carlos Riquelme, George Tucker, and Jasper Snoek.
\newblock Deep {B}ayesian bandits showdown: An empirical comparison of
  {B}ayesian deep networks for thompson sampling.
\newblock In \emph{International Conference on Learning Representations}, 2018.

\bibitem[Roeder et~al.(2017)Roeder, Wu, and Duvenaud]{roeder2017sticking}
Geoffrey Roeder, Yuhuai Wu, and David~K Duvenaud.
\newblock Sticking the landing: Simple, lower-variance gradient estimators for
  variational inference.
\newblock In \emph{Advances in Neural Information Processing Systems}, pp.\
  6925--6934, 2017.

\bibitem[Russo \& Van~Roy(2016)Russo and Van~Roy]{russo2016information}
Daniel Russo and Benjamin Van~Roy.
\newblock An information-theoretic analysis of thompson sampling.
\newblock \emph{The Journal of Machine Learning Research}, 17\penalty0
  (1):\penalty0 2442--2471, 2016.

\bibitem[Salimbeni \& Deisenroth(2017)Salimbeni and
  Deisenroth]{salimbeni2017doubly}
Hugh Salimbeni and Marc Deisenroth.
\newblock Doubly stochastic variational inference for deep {G}aussian
  processes.
\newblock In \emph{Advances in Neural Information Processing Systems}, pp.\
  4588--4599, 2017.

\bibitem[Shah et~al.(2014)Shah, Wilson, and Ghahramani]{shah2014student}
Amar Shah, Andrew Wilson, and Zoubin Ghahramani.
\newblock Student-t processes as alternatives to {G}aussian processes.
\newblock In \emph{Artificial Intelligence and Statistics}, pp.\  877--885,
  2014.

\bibitem[Shi et~al.(2018{\natexlab{a}})Shi, Sun, and Zhu]{shi2017kernel}
Jiaxin Shi, Shengyang Sun, and Jun Zhu.
\newblock Kernel implicit variational inference.
\newblock In \emph{International Conference on Learning Representations},
  2018{\natexlab{a}}.

\bibitem[Shi et~al.(2018{\natexlab{b}})Shi, Sun, and Zhu]{shi2018spectral}
Jiaxin Shi, Shengyang Sun, and Jun Zhu.
\newblock A spectral approach to gradient estimation for implicit
  distributions.
\newblock \emph{International Conference on Machine Learning},
  2018{\natexlab{b}}.

\bibitem[Snelson \& Ghahramani(2006)Snelson and Ghahramani]{snelson2006sparse}
Edward Snelson and Zoubin Ghahramani.
\newblock Sparse {Gaussian} processes using pseudo-inputs.
\newblock In \emph{Advances in Neural Information Processing Systems}, pp.\
  1257--1264, 2006.

\bibitem[S{\o}nderby et~al.(2016)S{\o}nderby, Caballero, Theis, Shi, and
  Husz{\'a}r]{sonderby2016amortised}
Casper~Kaae S{\o}nderby, Jose Caballero, Lucas Theis, Wenzhe Shi, and Ferenc
  Husz{\'a}r.
\newblock Amortised map inference for image super-resolution.
\newblock \emph{arXiv preprint arXiv:1610.04490}, 2016.

\bibitem[Sun et~al.(2017)Sun, Chen, and Carin]{sun2017learning}
Shengyang Sun, Changyou Chen, and Lawrence Carin.
\newblock Learning structured weight uncertainty in {B}ayesian neural networks.
\newblock In \emph{Artificial Intelligence and Statistics}, pp.\  1283--1292,
  2017.

\bibitem[Sun et~al.(2018)Sun, Zhang, Wang, Zeng, Li, and
  Grosse]{sun2018differentiable}
Shengyang Sun, Guodong Zhang, Chaoqi Wang, Wenyuan Zeng, Jiaman Li, and Roger
  Grosse.
\newblock Differentiable compositional kernel learning for {G}aussian
  processes.
\newblock In \emph{International Conference on Machine Learning}, pp.\
  4828--4837, 2018.

\bibitem[Thompson(1933)]{thompson1933likelihood}
William~R Thompson.
\newblock On the likelihood that one unknown probability exceeds another in
  view of the evidence of two samples.
\newblock \emph{Biometrika}, 25\penalty0 (3/4):\penalty0 285--294, 1933.

\bibitem[Titsias(2009)]{titsias2009variational}
Michalis Titsias.
\newblock Variational learning of inducing variables in sparse {Gaussian}
  processes.
\newblock In \emph{Artificial Intelligence and Statistics}, pp.\  567--574,
  2009.

\bibitem[Wang \& Jegelka(2017)Wang and Jegelka]{wang2017max}
Zi~Wang and Stefanie Jegelka.
\newblock Max-value entropy search for efficient {B}ayesian optimization.
\newblock In \emph{International Conference on Machine Learning}, pp.\
  3627--3635, 2017.

\bibitem[Wen et~al.(2018)Wen, Vicol, Ba, Tran, and Grosse]{wen2018flipout}
Yeming Wen, Paul Vicol, Jimmy Ba, Dustin Tran, and Roger Grosse.
\newblock Flipout: Efficient pseudo-independent weight perturbations on
  mini-batches.
\newblock \emph{arXiv preprint arXiv:1803.04386}, 2018.

\bibitem[Williams \& Seeger(2001)Williams and Seeger]{williams2001using}
Christopher~KI Williams and Matthias Seeger.
\newblock Using the {N}ystr{\"o}m method to speed up kernel machines.
\newblock In \emph{Advances in Neural Information Processing Systems}, pp.\
  682--688, 2001.

\bibitem[Wilson \& Nickisch(2015)Wilson and Nickisch]{wilson2015kernel}
Andrew Wilson and Hannes Nickisch.
\newblock Kernel interpolation for scalable structured {G}aussian processes
  ({KISS}-{GP}).
\newblock In \emph{International Conference on Machine Learning}, pp.\
  1775--1784, 2015.

\bibitem[Zhang et~al.(2018)Zhang, Sun, Duvenaud, and Grosse]{zhang2017noisy}
Guodong Zhang, Shengyang Sun, David Duvenaud, and Roger Grosse.
\newblock Noisy natural gradient as variational inference.
\newblock In \emph{International Conference on Machine Learning}, pp.\
  5852--5861, 2018.

\end{thebibliography}
\bibliographystyle{iclr2019_conference}

\clearpage
\appendix
\section{Functional KL Divergence}\label{app:stochastic-process}

\subsection{Background}

We begin with some basic terminology and classical results. See \citet{graybook} and \citet{folland2013real} for more details.

\begin{deff}[KL divergence] \label{def:kl}
    Given a probability measure space $(\Omega, \mathcal{F}, P)$ and another probability measure $M$ on the smae space, the KL divergence of $P$ with respect to $M$ is defined as
    \begin{equation}
    \KL{P}{M} = \sup_{\mathcal{Q}}\mathrm{KL}[P_{\mathcal{Q}}\|M_{\mathcal{Q}}].
    \end{equation}
    where the supremum is taken over all finite measurable partitions $\mathcal{Q}=\{Q_i\}_{i=1}^n$ of $\Omega$, and $P_{\mathcal{Q}}, M_{\mathcal{Q}}$ represent the discrete measures over the partition $\mathcal{Q}$, respectively.
\end{deff}

\begin{deff}[Pushforward measure]
    Given probability spaces $(X, \mathcal{F}_X, \mu)$ and $(Y, \mathcal{F}_Y, \nu)$, we say that measure $\nu$ is a pushforward of $\mu$ if $\nu(A) = \mu(f^{-1}(A))$ for a measurable $f: X\to Y$ and any $A\in \mathcal{F}_Y$. This relationship is denoted by $\nu = \mu \circ f^{-1}$.
\end{deff}

\begin{deff}[Canonical projection map]
    Let $T$ be an arbitrary index set, and $\{(\Omega_t, \mathcal{F}_t)\}_{t\in T}$ be some collection of measurable spaces. For each subset $J \subset I \subset T$, define
    $
    \Omega^{J} = \prod_{t\in J} \Omega_t.
    $
    We call $\pi_{I\to J}$ the canonical projection map from $I$ to $J$ if
    \begin{equation}
    \pi_{I\to J}(w) = w|_J \in \Omega^J, \forall w\in \Omega^{I}.
    \end{equation}
    Where $w|_J$ is defined as, if $w=(w_i)_{i \in I}$, then $w|_J= (w_i)_{i \in J}$.
\end{deff}

\begin{deff}[Cylindrical $\sigma$-algebra] \label{def:cylinder-algebra}
Let $T$ be an arbitrary index set, $(\Omega, \mathcal{F})$ be a measurable space. Suppose 
\begin{equation}
    \Omega^{T} = \{ f: f(t) \in \Omega, t \in T\}.
\end{equation}
is the set of $\Omega$-valued functions. A cylinder subset is a finitely restricted set defined as 
\begin{equation}
    C_{t_1, \cdots, t_n}(B_1, \cdots, B_n) = \{ f \in \Omega^{T}: f(t_i) \in B_i, 1 \leq i \leq n \}.
\end{equation}
Let 
\begin{align}
\mathcal{G}_{t_1, \cdots, t_n} 
&= \{ C_{t_1, \cdots, t_n}(B_1, \cdots, B_n) : B_i \in  \mathcal{F}, 1 \leq i \leq n\} \notag \\
 \mathcal{G}_{\Omega^{\mathbb{T}}} 
 &= \overset{\infty}{\underset{n=1}{\cup}} \underset{t_i \in T, i\leq n}{\cup}  \mathcal{G}_{t_1, \cdots, t_n}
\end{align}
We call the $\sigma$-algebra $\mathcal{F}^{T}:=\sigma(\mathcal{G}_{\Omega^{T}} )$ as the cylindrical $\sigma$-algebra of $\Omega^{T}$, and $(\Omega^{T}, \mathcal{F}^{T})$ the cylindrical measurable space.
\end{deff}

The Kolmogorov Extension Theorem is the foundational result used to construct many stochastic processes, such as Gaussian processes. A particularly relevant fact for our purposes is that this theorem defines a measure on a cylindrical measurable space, using only canonical projection measures on finite sets of points.

\begin{thm}[Kolmogorov extension theorem~\citep{oksendal2003stochastic}] \label{thm:kolmogorov}
    Let $T$ be an arbitrary index set. $(\Omega, \mathcal{F})$ is a standard measurable space, whose cylindrical measurable space on $T$ is $(\Omega^T, \mathcal{F}^T)$. Suppose that for each finite subset $I \subset T$, we have a probability measure $\mu_I$ on $\Omega^I$, and these measures satisfy the following compatibility relationship:
    for each subset $J \subset I$, we have
    \begin{equation}
    \mu_J = \mu_I \circ \pi_{I\to J}^{-1}.
    \end{equation}
    Then there exists a unique probability measure $\mu$ on $\Omega^T$ such that for all finite subsets $I \subset T$,
    \begin{equation}
    \mu_I = \mu \circ \pi_{T\to I}^{-1}.
    \end{equation}
\end{thm}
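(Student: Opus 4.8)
The plan is to construct $\mu$ first on the algebra of cylinder sets, verify that it is a well-defined finitely additive set function there, upgrade this to countable additivity, and then invoke the Carathéodory extension theorem to extend it uniquely to $\mathcal{F}^T = \sigma(\mathcal{G}_{\Omega^T})$. The marginal identity $\mu_I = \mu \circ \pi_{T\to I}^{-1}$ will hold by construction, and uniqueness will follow from the uniqueness clause of Carathéodory's theorem.

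First I would define the candidate measure directly on cylinder sets. By Definition~\ref{def:cylinder-algebra}, every cylinder set can be written as $C = \pi_{T\to I}^{-1}(A)$ for some finite $I \subset T$ and some $A \in \mathcal{F}^I$, so I set $\mu(C) := \mu_I(A)$. The first thing to check is that this is well defined, since a single cylinder admits many such representations. If $\pi_{T\to I}^{-1}(A) = \pi_{T\to I'}^{-1}(A')$, I would pass to the common refinement $I'' = I \cup I'$ and use the compatibility relations $\mu_I = \mu_{I''} \circ \pi_{I''\to I}^{-1}$ and $\mu_{I'} = \mu_{I''} \circ \pi_{I''\to I'}^{-1}$ to conclude $\mu_I(A) = \mu_{I'}(A')$. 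The same refinement trick shows that the cylinder sets are closed under finite unions, intersections, and complements, hence form an algebra $\mathcal{A}$, and that $\mu$ is finitely additive on $\mathcal{A}$: any finite family of cylinders can be lifted to a single finite index set $I''$, where additivity reduces to the finite additivity of the genuine measure $\mu_{I''}$.

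The crux, and the step I expect to be the main obstacle, is upgrading finite additivity to countable additivity on $\mathcal{A}$, equivalently continuity at $\emptyset$: if $C_1 \supseteq C_2 \supseteq \cdots$ are cylinders with $\bigcap_n C_n = \emptyset$, then $\mu(C_n) \to 0$. This is precisely where the hypothesis that $(\Omega,\mathcal{F})$ is a standard measurable space is essential, since for arbitrary $(\Omega,\mathcal{F})$ this continuity can fail. I would argue by contradiction: suppose $\mu(C_n) \geq \epsilon > 0$ for all $n$. After lifting all cylinders to an increasing chain of finite index sets $I_1 \subseteq I_2 \subseteq \cdots$, every $C_n$ is based on the countable set $S = \bigcup_n I_n$. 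Using inner regularity of each finite-dimensional $\mu_{I_n}$ by compact sets (a property of standard spaces), I would approximate $C_n$ from within by a cylinder $K_n \subseteq C_n$ whose base is compact, chosen so that $\mu(C_n \setminus K_n) < \epsilon\, 2^{-n-1}$. Replacing $K_n$ by $\tilde K_n = K_1 \cap \cdots \cap K_n$ keeps the base compact and keeps $\mu(\tilde K_n) \geq \epsilon/2 > 0$, so each $\tilde K_n$ is nonempty. Choosing $f_n \in \tilde K_n$ and exploiting the finite intersection property of the nested compact bases through a diagonal, Tychonoff-type argument over the countably many coordinates in $S$, I would extract a point $f \in \bigcap_n \tilde K_n \subseteq \bigcap_n C_n$, contradicting $\bigcap_n C_n = \emptyset$. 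Hence $\mu(C_n) \to 0$.

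With countable additivity established, $\mu$ is a premeasure on $\mathcal{A}$, and $\mu(\Omega^T) = 1$ makes it a probability premeasure. The Carathéodory extension theorem then yields a measure $\mu$ on $\sigma(\mathcal{A}) = \mathcal{F}^T$ extending it, and by construction $\mu \circ \pi_{T\to I}^{-1} = \mu_I$ for every finite $I \subset T$, which is the asserted marginal identity. For uniqueness I would observe that $\mathcal{A}$ is a $\pi$-system generating $\mathcal{F}^T$ on which any two such extensions agree and are finite; the $\pi$--$\lambda$ theorem (equivalently the uniqueness clause of Carathéodory) forces the two extensions to coincide on all of $\mathcal{F}^T$.
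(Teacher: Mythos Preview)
The paper does not prove this theorem at all: it is listed in Appendix~A.1 (``Background'') as a classical result, with attribution to \citet{oksendal2003stochastic}, and is then \emph{used} as a tool in the proof of the functional KL identity (Theorem~\ref{thm:kl}). So there is nothing to compare your proposal against.

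That said, your sketch is the standard textbook argument (premeasure on the algebra of cylinder sets, well-definedness via a common refinement of index sets, countable additivity via inner regularity by compacts and a diagonal/Tychonoff extraction, then Carath\'eodory and $\pi$--$\lambda$ for uniqueness), and it is correct in outline. The only place to be a little more careful if you ever write it out in full is the compactness step: ``standard measurable space'' gives you a Borel isomorphism to a Polish space, so you must first transport the problem to a Polish model before invoking inner regularity by compacts and the diagonal argument. Once that is made explicit, your argument goes through.
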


In the context of Gaussian processes, $\mu$ is a Gaussian measure on a separable Banach space, and the $\mu_I$ are marginal Gaussian measures at finite sets of input positions~\citep{mallasto2017learning}.

\begin{thm} \label{thm:kl-asymp} Suppose that $M$ and $P$ are measures on the sequence space corresponding to outcomes of a sequence of random variables $X_0, X_1, \cdots$ with alphabet $A$. Let $\mathcal{F}_n = \sigma(X_0, \cdots, X_{n-1})$, which asymptotically generates the $\sigma$-algebra $\sigma(X_0, X_1, \cdots)$. Then
\begin{align}
    \KL{P}{M} = \lim_{n \to \infty} \KL{P_{\mathcal{F}_n}}{M_{\mathcal{F}_n}}
\end{align}
Where $P_{\mathcal{F}_n}, M_{\mathcal{F}_n}$ denote the pushforward measures with $f: f(X_0, X_1, \cdots) = f(X_0, \cdots, X_{n-1})$, respectively.
\end{thm}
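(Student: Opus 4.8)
The plan is to combine the variational definition of KL divergence in Definition~\ref{def:kl} (supremum over finite measurable partitions) with two structural facts about the $\sigma$-algebras $\mathcal{F}_n = \sigma(X_0,\dots,X_{n-1})$: they are increasing, $\mathcal{F}_n \subseteq \mathcal{F}_{n+1}$, and their union $\bigcup_n \mathcal{F}_n$ is an algebra that generates $\sigma(X_0,X_1,\dots)$. The easy half is monotonicity plus an inclusion of partitions; the hard half is showing that partitions built from the generating algebra already realize the full supremum.

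\textbf{Step 1 (monotonicity and the easy inequality).} First I would observe that $n \mapsto \KL{P_{\mathcal{F}_n}}{M_{\mathcal{F}_n}}$ is non-decreasing: every $\mathcal{F}_n$-measurable finite partition is a coarsening of some $\mathcal{F}_{n+1}$-measurable one, and the discrete KL divergence does not decrease under refinement (log-sum inequality / data-processing). Hence the limit $L := \lim_n \KL{P_{\mathcal{F}_n}}{M_{\mathcal{F}_n}} \in [0,\infty]$ exists. Moreover, since the partitions in Definition~\ref{def:kl} range over \emph{all} finite measurable partitions of the sequence space, and $\KL{P_{\mathcal{F}_n}}{M_{\mathcal{F}_n}}$ is the supremum over only the $\mathcal{F}_n$-measurable ones, we get $\KL{P_{\mathcal{F}_n}}{M_{\mathcal{F}_n}} \le \KL{P}{M}$ for every $n$, and therefore $L \le \KL{P}{M}$.

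\textbf{Step 2 (the reverse inequality via approximation of partitions).} For the converse it suffices to show: for every finite measurable partition $\mathcal{Q} = \{Q_i\}_{i=1}^k$ of the sequence space and every $\epsilon>0$, there exist an $n$ and an $\mathcal{F}_n$-measurable partition $\mathcal{Q}'$ with $\mathrm{KL}[P_{\mathcal{Q}'}\|M_{\mathcal{Q}'}] \ge \mathrm{KL}[P_{\mathcal{Q}}\|M_{\mathcal{Q}}] - \epsilon$; taking the supremum over $\mathcal{Q}$ then gives $\KL{P}{M} \le L$. Because $\bigcup_n \mathcal{F}_n$ is an algebra generating $\sigma(X_0,X_1,\dots)$, the standard measure-theoretic approximation theorem applied to the finite measure $P+M$ lets us pick, for each $i$, a set $A_i \in \bigcup_n \mathcal{F}_n$ with $(P+M)(Q_i \triangle A_i)$ arbitrarily small; choosing $n$ large we may take all $A_i \in \mathcal{F}_n$. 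The $A_i$ need not be disjoint, so I would disjointify them ($Q_1' = A_1$, $Q_i' = A_i \setminus (A_1 \cup \dots \cup A_{i-1})$, dumping any leftover into one cell) and check that $\sum_i (P+M)(Q_i \triangle Q_i')$ stays small. This yields an $\mathcal{F}_n$-measurable partition $\mathcal{Q}'$ whose cells are close in total variation to those of $\mathcal{Q}$.

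\textbf{Step 3 (continuity of the partition functional; the infinite case).} What remains is that $\mathcal{Q} \mapsto \mathrm{KL}[P_{\mathcal{Q}}\|M_{\mathcal{Q}}]$ behaves continuously under such perturbations. When $\KL{P}{M}<\infty$ the relevant divergences are uniformly bounded and this is genuine, so Step~2 closes the argument and $L = \KL{P}{M}$. When $\KL{P}{M} = \infty$ I would instead show $L = \infty$ directly: if $P_{\mathcal{F}_n} \not\ll M_{\mathcal{F}_n}$ for some $n$, that term is already $\infty$ and monotonicity finishes; otherwise the densities $g_n = dP_{\mathcal{F}_n}/dM_{\mathcal{F}_n}$ form a nonnegative martingale under $M$, one has $\mathrm{KL}[P_{\mathcal{F}_n}\|M_{\mathcal{F}_n}] = \mathbb{E}_M[g_n \log g_n]$, and since $g_n \log g_n \ge -1/e$, Fatou's lemma applied along the martingale limit gives $\liminf_n \mathbb{E}_M[g_n \log g_n] \ge \KL{P}{M} = \infty$. (This same martingale computation, with conditional Jensen for monotonicity and Fatou for the limit, in fact gives a cleaner proof of the whole theorem whenever $\KL{P}{M}<\infty$, sidestepping the partition approximation.)

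\textbf{Main obstacle.} I expect Step~3 in full generality to be the sticking point: KL divergence is only lower semicontinuous and can jump to $+\infty$, and the case where $P \not\ll M$ on the generated $\sigma$-algebra while every finite-dimensional restriction is absolutely continuous — mass ``escaping to infinity'' — must be handled separately. This is precisely the delicate point addressed in classical treatments such as \citet{graybook}, and I would follow that route for the technical core.
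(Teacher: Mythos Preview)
The paper does not actually prove this statement. Theorem~\ref{thm:kl-asymp} is listed in the ``Background'' subsection of Appendix~A alongside the Kolmogorov extension theorem and the basic definitions, and the section opener explicitly defers the details to \citet{graybook} and \citet{folland2013real}. In the paper it functions as a cited black-box lemma that is then invoked in Step~3 of the proof of the functional KL theorem; there is nothing in the paper to compare your argument against.

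That said, your sketch is essentially the classical argument and is sound in outline. Step~1 is correct and standard. The partition-approximation route of Step~2 works, and your diagnosis of Step~3 as the delicate point is accurate: lower semicontinuity of KL means you cannot simply pass to a limit of perturbed partitions without care, and the singular case ($P \not\ll M$ on the full $\sigma$-algebra while every $P_{\mathcal{F}_n} \ll M_{\mathcal{F}_n}$) is exactly where naive continuity fails. Your proposed fix via the Radon--Nikodym martingale $g_n = dP_{\mathcal{F}_n}/dM_{\mathcal{F}_n}$, conditional Jensen for monotonicity, and Fatou for the limit is in fact the cleaner and more standard proof of the whole result (this is how \citet{graybook} does it), so if you were to write this up fully I would lead with the martingale argument rather than treat it as a fallback.
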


\subsection{Functional KL divergence}

Here we clarify what it means for the measurable sets to only depend on the values at a countable set of points. To begin with, we firstly introduce some definitions.

\begin{deff}[Replacing function]
For $f \in \Omega^T, t_0 \in T, v \in \Omega$, a replacing function $f^r_{t_0, v}$ is,
\begin{align}
    f^r_{t_0, v}(t) = \begin{cases}
 f(t), &\; t \neq t_0 \\
 v, &\; t = t_0
 \end{cases}.
\end{align}
\end{deff}
\begin{deff}[Restricted Indices] \label{def:res-index}

For $H \in  \mathcal{F}^T$, we define the free indices ${\tau}^c(H)$:  
\begin{align}
   {\tau}^c(H) = \{t \,|\, \textrm{for any } f \in H, \textrm{we have } f^r_{t, v} \in H \textrm{ for all } v \in \Omega\}
\end{align}
The complement $\tau(H) = T \backslash {\tau}^c(H)$ is the set of restricted indices. For example, for the set $H=\{f| f \in \mathbb{R}^{T}, f(0) \in (-1, 2), f(1) \in (0, 1)\}$, the restricted indices are $\tau(H) = \{0, 1\}$.
\end{deff}

Restricted index sets satisfy the following properties:
\begin{itemize}
\item $\textrm{for any } H \in  \mathcal{F}^T$, $\tau(H)=\tau(H^c)$.
\item $\textrm{for any indices set } I \textrm{ and measureable sets } \{H_i; H_i \in  \mathcal{F}^T \}_{i \in I}$, $\tau(\underset{i \in I}\cup{H_i}) \subseteq \underset{i \in I} \cup \tau(H_i) $.
\end{itemize}

Having defined restricted indices, a key step in our proof is to show that, for any measureable set $H$ in a cylindrical measureable space $(\Omega^T, \mathcal{F}^T)$, its set of restricted indices $\tau(H)$ is countable.

\begin{lem} \label{lem:countable-related}
Given a cylindrical measureable space $(\Omega^T, \mathcal{F}^T)$, for any $H \in \mathcal{F}^T$, $\tau(H)$ is countable.
\end{lem}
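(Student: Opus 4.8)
The plan is to prove the lemma by a standard "good sets" argument on the $\sigma$-algebra $\mathcal{F}^T = \sigma(\mathcal{G}_{\Omega^T})$. Define
\[
\mathcal{A} = \{ H \in \mathcal{F}^T : \tau(H) \text{ is countable}\}.
\]
I want to show $\mathcal{A} = \mathcal{F}^T$. Since $\mathcal{F}^T$ is the smallest $\sigma$-algebra containing the generating family of cylinder sets $\mathcal{G}_{\Omega^T}$, it suffices to show (i) every cylinder set lies in $\mathcal{A}$, and (ii) $\mathcal{A}$ is a $\sigma$-algebra (or at least closed under complementation and countable unions).

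For step (i): a cylinder set $C_{t_1,\dots,t_n}(B_1,\dots,B_n)$ constrains only the coordinates $t_1,\dots,t_n$; for any index $t \notin \{t_1,\dots,t_n\}$ and any $f$ in the set, the replacing function $f^r_{t,v}$ is still in the set for every $v$, since the membership condition only reads off $f(t_1),\dots,f(t_n)$. Hence $\tau^c(C) \supseteq T \setminus \{t_1,\dots,t_n\}$, so $\tau(C) \subseteq \{t_1,\dots,t_n\}$ is finite, hence countable. For step (ii): closure under complementation is immediate from the already-stated property $\tau(H) = \tau(H^c)$. Closure under countable unions follows from the already-stated property $\tau\big(\bigcup_{i\in I} H_i\big) \subseteq \bigcup_{i\in I}\tau(H_i)$: if $I$ is countable and each $\tau(H_i)$ is countable, then the right-hand side is a countable union of countable sets, hence countable, so $\tau(\bigcup_i H_i)$ is countable. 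Since $\emptyset \in \mathcal{A}$ trivially ($\tau(\emptyset)=\emptyset$), $\mathcal{A}$ is a $\sigma$-algebra containing all cylinder sets, whence $\mathcal{A} \supseteq \sigma(\mathcal{G}_{\Omega^T}) = \mathcal{F}^T$, and the reverse inclusion is by definition. This gives the claim.

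The one place that needs genuine care — and which I expect to be the main obstacle — is verifying the two claimed properties of $\tau$ that I am using as black boxes, especially the subadditivity under unions: $\tau(\bigcup_i H_i) \subseteq \bigcup_i \tau(H_i)$, equivalently $\bigcap_i \tau^c(H_i) \subseteq \tau^c(\bigcup_i H_i)$. The subtlety is that the definition of $\tau^c(H)$ quantifies "for any $f \in H$": if $t \in \tau^c(H_i)$ for every $i$, I need that for any $f \in \bigcup_i H_i$ and any $v$, $f^r_{t,v} \in \bigcup_i H_i$. Given such an $f$, it lies in some $H_{i_0}$, and since $t \in \tau^c(H_{i_0})$ we get $f^r_{t,v} \in H_{i_0} \subseteq \bigcup_i H_i$ — so this does go through cleanly once unwound, but the quantifier bookkeeping is the part worth writing out carefully. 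The complementation property $\tau(H) = \tau(H^c)$ similarly reduces to: $t \in \tau^c(H) \iff t \in \tau^c(H^c)$, which holds because $f^r_{t,v} \in H$ for all $v$ (for all $f \in H$) is equivalent, by taking contrapositives and noting $(f^r_{t,v})^r_{t,f(t)} = f$, to $f^r_{t,v} \in H^c$ for all $v$ (for all $f \in H^c$); this small involution argument is the only mildly delicate point and I would spell it out. Everything else is the routine monotone/good-sets machinery.
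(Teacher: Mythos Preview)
Your proposal is correct and follows essentially the same ``good sets'' argument as the paper: define the collection of sets with countable restricted index set, show it is a $\sigma$-algebra via the two stated properties of $\tau$, and observe it contains the cylinder generators. The only difference is that you additionally sketch verifications of the two $\tau$-properties (complementation and subadditivity under unions), which the paper simply asserts after the definition; your unwindings of both are fine.
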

\begin{proof}

Define $\mathcal{H}=\{H | H \in \mathcal{F}^{T}, \tau(H) \textrm{ is countable}\}, \mathcal{H} \subseteq \mathcal{F}^T$. By the two properties of restricted indices in \Cref{def:res-index}, $\mathcal{H}$ is a $\sigma$-algebra on $\Omega^{T}$.

On the other hand, $\mathcal{F}^T = \sigma(\mathcal{G}_{\Omega^{T}} )$. Because any set in $\mathcal{G}_{\Omega^{T}}$ has finite restricted indices, $\mathcal{G}_{\Omega^{T}} \subseteq \mathcal{H}$. Therefore $\mathcal{H}$ is a $\sigma$-algebra containing $\mathcal{G}_{\Omega^{T}}$.  Thus $\mathcal{H} \supseteq \sigma(\mathcal{G}_{\Omega^{T}} ) = \mathcal{F}^T$.

Overall, we conclude $\mathcal{H} = \mathcal{F}^T$. For any $H \in \mathcal{F}^T$, $\tau(H)$ is countable.
\end{proof}

\begin{thm} For two stochastic processes $P, M$ on a cylindrical measurable space $(\Omega^{{T}}, \mathcal{F}^{{T}})$, the KL divergence of $P$ with respect to $M$ satisfies,
$$
    \KL{P}{M} = \sup_{T_d} \KL{P_{T_d}}{M_{T_d}},
$$
where the supremum is over all finite indices subsets $T_d \subseteq T$, and $P_{T_d}, M_{T_d}$ represent 
the canonical projection maps $\pi_{T \to T_d}$ of $P, M$, respectively.
\end{thm}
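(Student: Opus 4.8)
The plan is to establish the two inequalities separately. The direction $\sup_{T_d}\KL{P_{T_d}}{M_{T_d}} \le \KL{P}{M}$ is the easy one: for any finite $T_d \subseteq T$ the canonical projection $\pi_{T\to T_d}$ is measurable from $(\Omega^T,\mathcal{F}^T)$ to $(\Omega^{T_d},\mathcal{F}^{T_d})$, and $P_{T_d}, M_{T_d}$ are exactly its pushforwards. Since the KL divergence as given in \Cref{def:kl} is monotone under pushforward — any finite measurable partition of $\Omega^{T_d}$ pulls back along $\pi_{T\to T_d}$ to a finite measurable partition of $\Omega^T$ carrying the same discretized masses — we get $\KL{P_{T_d}}{M_{T_d}} \le \KL{P}{M}$, and taking the supremum over $T_d$ preserves the bound.

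For the reverse direction $\KL{P}{M} \le \sup_{T_d}\KL{P_{T_d}}{M_{T_d}}$, I start from \Cref{def:kl}: it suffices to show that for every finite measurable partition $\mathcal{Q}=\{Q_1,\dots,Q_n\}$ of $\Omega^T$ we have $\KL{P_{\mathcal{Q}}}{M_{\mathcal{Q}}} \le \sup_{T_d}\KL{P_{T_d}}{M_{T_d}}$. By \Cref{lem:countable-related} each $\tau(Q_i)$ is countable, hence $T_c := \bigcup_{i=1}^n \tau(Q_i)$ is countable. The next step — which I expect to be the main obstacle — is to upgrade "$\tau(Q_i)\subseteq T_c$'' to the statement that $Q_i$ lies in the sub-$\sigma$-algebra $\pi_{T\to T_c}^{-1}(\mathcal{F}^{T_c})$, i.e. $Q_i = \pi_{T\to T_c}^{-1}(B_i)$ for some $B_i \in \mathcal{F}^{T_c}$. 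This cannot be done coordinate-by-coordinate via the replacing-function property, since $T\setminus T_c$ may be uncountable; instead I would show that $\mathcal{B} := \bigcup_{T_c \text{ countable}} \pi_{T\to T_c}^{-1}(\mathcal{F}^{T_c})$ is itself a $\sigma$-algebra (any countable family of its members lives over one common countable index set, and complements are preserved), that it contains all cylinder sets $\mathcal{G}_{\Omega^T}$, and therefore $\mathcal{B} = \sigma(\mathcal{G}_{\Omega^T}) = \mathcal{F}^T$. Enlarging $T_c$ if necessary, one countable $T_c$ serves for all of $Q_1,\dots,Q_n$ simultaneously.

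Given the representation $Q_i = \pi_{T\to T_c}^{-1}(B_i)$, the sets $B_1,\dots,B_n$ form a finite measurable partition $\tilde{\mathcal{Q}}$ of $\Omega^{T_c}$, and since $P_{T_c}=P\circ\pi_{T\to T_c}^{-1}$ and $M_{T_c}=M\circ\pi_{T\to T_c}^{-1}$, the discretized masses coincide, so $\KL{P_{\mathcal{Q}}}{M_{\mathcal{Q}}} = \KL{(P_{T_c})_{\tilde{\mathcal{Q}}}}{(M_{T_c})_{\tilde{\mathcal{Q}}}} \le \KL{P_{T_c}}{M_{T_c}}$. It then remains to bound $\KL{P_{T_c}}{M_{T_c}}$ by the supremum over finite index sets: if $T_c$ is finite this is immediate, and if $T_c=\{t_1,t_2,\dots\}$ is countably infinite I would apply \Cref{thm:kl-asymp} with $X_k$ the $t_k$-th coordinate map and $\mathcal{F}_k=\sigma(X_{t_1},\dots,X_{t_k})$, which asymptotically generates $\mathcal{F}^{T_c}$, yielding $\KL{P_{T_c}}{M_{T_c}} = \lim_{k\to\infty}\KL{P_{\{t_1,\dots,t_k\}}}{M_{\{t_1,\dots,t_k\}}} \le \sup_{T_d}\KL{P_{T_d}}{M_{T_d}}$. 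Chaining these inequalities and then taking the supremum over all finite partitions $\mathcal{Q}$ of $\Omega^T$ gives $\KL{P}{M}\le\sup_{T_d}\KL{P_{T_d}}{M_{T_d}}$, which together with the first direction proves equality. The two places requiring care are the pushforward/partition bookkeeping (to see the discretized KL values genuinely match) and the $\sigma$-algebra argument identifying countably-supported measurable sets with honest projection preimages.
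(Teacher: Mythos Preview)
Your proposal is correct and follows the same three-stage scaffold as the paper: (i) express $\KL{P}{M}$ as a supremum over finite measurable partitions; (ii) show each such partition depends only on a countable index set $T_c$; (iii) reduce countable $T_c$ to finite $T_d$ via \Cref{thm:kl-asymp}.

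The one substantive difference is in how step (ii) is executed. The paper uses \Cref{lem:countable-related} (restricted indices are countable) and then pushes the partition forward along $\pi_{T\to T_c}$, implicitly relying on $Q_i = \pi_{T\to T_c}^{-1}(\pi_{T\to T_c}(Q_i))$. You correctly flag that the replacing-function definition of $\tau^c$ only justifies this one coordinate at a time, which is not enough when $T\setminus T_c$ is uncountable; your fix is the standard argument that $\mathcal{B}=\bigcup_{T_c\ \text{countable}}\pi_{T\to T_c}^{-1}(\mathcal{F}^{T_c})$ is a $\sigma$-algebra containing the cylinders and hence equals $\mathcal{F}^T$. This is cleaner and in fact makes \Cref{lem:countable-related} unnecessary for the proof. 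You also handle the easy inequality $\sup_{T_d}\KL{P_{T_d}}{M_{T_d}}\le\KL{P}{M}$ via pushforward monotonicity, whereas the paper obtains it at the end by exhibiting, for each finite $T_d$, an explicit partition of $\Omega^T$ whose restricted index set equals $T_d$; both are fine, yours is shorter.
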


\begin{proof}

Recall that stochastic processes are defined over a cylindrical $\sigma$-algebra $\mathcal{F}^T$. By \Cref{lem:countable-related}, for every set $H \in \mathcal{F}^T$, the restricted index set $\tau(H)$ is countable. Our proof proceeds in two steps:
\begin{enumerate}
    \item Any finite measurable partition of $\Omega^{T}$ corresponds to a finite measurable partition over some $\Omega^{T_c}$, where $T_c$ is a countable index set.
    \item Correspondence between partitions implies correspondence between KL divergences.
    \item KL divergences over a countable indices set can be represented as supremum of KL divergences over finite indices sets.
\end{enumerate}

{\bf Step 1.} By \Cref{def:kl},
$$
    \KL{P}{M} = \underset{\mathcal{Q}_{\Omega^T}}{\sup}\; \mathrm{KL}[P_{\mathcal{Q}_{\Omega^T}}\|M_{\mathcal{Q}_{\Omega^T}}],
$$
where the sup is over all finite measurable partitions of the function space $\Omega^T$, denoted by $\mathcal{Q}_{\Omega^{T}}$:
$$
\mathcal{Q}_{\Omega^{T}} = \{Q_{\Omega^{T}}^{(1)}, \dots, Q_{\Omega^{T}}^{(k)}|\bigcup_{i=1}^k Q_{\Omega^{T}}^{(i)} = \Omega^{T}, Q_{\Omega^{T}}^{(i)} \in \mathcal{F}^{T} \text{ are disjoint sets}, k\in \mathbb{N}^+\}.
$$
By \Cref{lem:countable-related}, each $\tau(Q_{\Omega^{T}}^{(i)})$ is countable. So the combined restricted index set
$
    T_c := \displaystyle\bigcup_{i=1}^k\tau(Q_{\Omega^T}^{(i)})
$ is countable.

Consider the canonical projection mapping $\pi_{T \to T_c}$, which induces a partition on $\Omega^{T_c}$, denoted by $\mathcal{Q}_{\Omega^{T_c}}$:
$$
Q_{\Omega^{T_c}}^{(i)} = \pi_{T \to T_c}(Q_{\Omega^T}^{(i)}).
$$
The pushforward measure defined by this mapping is
$$
P_{T_c} = P\circ \pi_{T\to T_c}^{-1},\quad
M_{T_c} = M\circ \pi_{T\to T_c}^{-1}.$$
{\bf Step 2.} Then we have
\begin{align}
     \KL{P}{M} &= \underset{\mathcal{Q}_{\Omega^T}}{\sup}\; \mathrm{KL}[P_{\mathcal{Q}_{\Omega^T}}\|M_{\mathcal{Q}_{\Omega^T}}]\\
    &= \sup_{\mathcal{Q}_{\Omega^{T}}} \sum_{i}P(Q_{\Omega^{T}}^{(i)})\log \frac{P(Q_{\Omega^{T}}^{(i)})}{M(Q_{\Omega^{T}}^{(i)})} \\
    &= \sup_{{T}_c}\sup_{\mathcal{Q}_{\Omega^{{T}_c}}} \sum_{i}P_{{T}_c}(Q_{\Omega^{{T}_c}}^{(i)})\log \frac{P_{{T}_c}(Q_{\Omega^{{T}_c}}^{(i)})}{M_{{T}_c}(Q_{\Omega^{{T}_c}}^{(i)})} \\
    &= \sup_{{T}_c}\KL{P_{{T}_c}}{M_{{T}_c}},
\end{align}

{\bf Step 3.} Denote $\mathcal{D}({T}_c)$ as the collection of all finite subsets of ${T}_c$. For any finite set ${T}_d \in \mathcal{D}({T}_c)$, we denote $P_{{T}_d}$ as the pushforward measure of $P_{{T}_c}$ on $\Omega^{{T}_d}$. From the Kolmogorov Extension Theorem (\Cref{thm:kolmogorov}), we know that $P_{{T}_d}$ corresponds to the finite marginals of $P$ at $\Omega^{{T}_d}$. Because ${T}_c$ is countable, based on \Cref{thm:kl-asymp}, we have,
\begin{align} \label{eq:kl-proof-last}
\KL{P}{M} &= \sup_{{T}_c}\KL{P_{{T}_c}}{M_{{T}_c}}  \notag \\
&= \sup_{{T}_c} \sup_{{T}_d  \in \mathcal{D}({T}_c)} \KL{P_{{T}_d}}{M_{{T}_d}}.
\end{align}

We are left with the last question: whether each ${T}_d$ is contained in some $\mathcal{D}({T}_c)$ ?

For any finite indices set ${T}_d$, we build a finite measureable partition $Q$. Let $\Omega = \Omega_0 \cup \Omega_1, \Omega_0 \cap \Omega_1 = \emptyset$. Assume $|{T}_d| = K, {T}_d=\{{T}_d(k)\}_{k=1:K}$, let $I = \{I^i | I^i = (I^i_1, I^i_2, \cdots I^i_K)\}_{i=1:2^K}$ to be all $K$-length binary vectors. We define the partition,
\begin{align}
    \mathcal{Q} &= \{Q^i\}_{i=1:2^K}, \\
    Q^i &= \overset{K}{\underset{k=1}{\cap}} \{f |  \begin{cases}
f({T}_f(k)) \in \Omega_0 , \; I^i(k)=0 \\
f({T}_f(k)) \in \Omega_1 , \; I^i(k)=1 
 \end{cases}
 \}.
\end{align}

Through this settting, $\mathcal{Q}$ is a finite parition of $\Omega^{{T}}$, and ${T}_c (\mathcal{Q}) = {T}_d$. Therefore ${T}_d$ in \Cref{eq:kl-proof-last} can range over all finite index sets, and we have proven the theorem.
\begin{align}
\KL{P}{M} 
= \sup_{{T}_d}  \KL{P_{{T}_d}}{M_{{T}_d}}.
\end{align}

\end{proof}

\subsection{KL Divergence between Conditional Stochastic Processes}
In this section, we give an example of computing the KL divergence between two conditional stochastic processes. Consider two datasets $\data_1, \data_2$, the KL divergence between two conditional stochastic processes is
\begin{align}
\KL{p(f|\data_1)}{p(f|\data_2)}
&= \underset{n, \bx_{1:n}}{\sup} \KL{p(\bbf^{\bx}|\data_1)}{p(\bbf^{\bx}|\data_2)}  \notag \\
&= \underset{n, \bx_{1:n}}{\sup} \mathbb{E}_{p(\bbf^{\bx}, \bbf^{D_1 \cup D_2}|\data_1)} \log \frac{p(\bbf^{\bx}, \bbf^{D_1 \cup D_2}|\data_1)}{p(\bbf^{\bx}, \bbf^{D_1 \cup D_2}|\data_2)} \notag \\
&= \underset{n, \bx_{1:n}}{\sup} \mathbb{E}_{p(\bbf^{\bx}, \bbf^{D_1 \cup D_2}|\data_1)} \log \frac{p(\bbf^{D_1 \cup D_2}|\data_1)p(\bbf^{\bx} | \bbf^{D_1 \cup D_2}, \data_1)}{p(\bbf^{D_1 \cup D_2}|\data_2)p(\bbf^{\bx} | \bbf^{D_1 \cup D_2}, \data_2)} \notag \\
&= \underset{n, \bx_{1:n}}{\sup} \mathbb{E}_{p(\bbf^{\bx}, \bbf^{D_1 \cup D_2}|\data_1)} \log \frac{p(\bbf^{D_1 \cup D_2}|\data_1)p(\bbf^{\bx} | \bbf^{D_1 \cup D_2})}{p(\bbf^{D_1 \cup D_2}|\data_2)p(\bbf^{\bx} | \bbf^{D_1 \cup D_2})} \notag \\
&= \KL{p(\bbf^{D_1 \cup D_2} | \data_1)}{p(\bbf^{D_1 \cup D_2}| \data_2)}
\end{align}
Therefore, the KL divergence between these two stochastic processes equals to the marginal KL divergence on the observed locations. When $\data_2=\emptyset$, $p(f|\data_2)=p(f)$, this shows the KL divergence between posterior process and prior process are the marginal KL divergence on observed locations.

This also justifies our usage of $M$ measurement points in the adversarial functional VI and sampling-based functional VI of \Cref{sec:fELBO}.

\section{Additional Proofs}

\subsection{Proof for Evidence Lower Bound} \label{app:proof-lower-bound}
This section provides proof for \Cref{thm:evidence-lb}.

    \begin{proof}[Proof of \Cref{thm:evidence-lb}]
    
Let $\bX^M = \bX \backslash \bX^D$ be measurement points which aren't in the training data. 
    	\begin{align}
    	\mathcal{L}_\bX(q) &= \mathbb{E}_q [\log p(\by^D|\bbf^D) + \log p(\bbf^{\bX}) - \log q_{\phi}(\bbf^{\bX})] \notag \\
    	&= \mathbb{E}_q [\log p(\by^D|\bbf^D) + \log p(\bbf^D,\bbf^M) - \log q_{\phi}(\bbf^D,\bbf^M)] \notag \\
    	 &= \log p(\mathcal{D}) - \mathbb{E}_q \left[\log \frac{q_{\phi}(\bbf^D,\bbf^M)p(\mathcal{D})}{p(\by^D|\bbf^D)p(\bbf^D,\bbf^M)} \notag \right] \notag \\
    	&= \log p(\mathcal{D}) -  \mathbb{E}_q \left[\log \frac{q_{\phi}(\bbf^D,\bbf^M)}{p(\bbf^D,\bbf^M|\mathcal{D})} \right] \notag \\
    	&= \log p(\mathcal{D}) - \KL{q_{\phi}(\bbf^D,\bbf^M)}{p(\bbf^D,\bbf^M|\mathcal{D})}  
    	\end{align}
    \end{proof}

\subsection{Consistency for Gaussian Processes}
\label{app:consistency}
This section provides proof for consistency in \Cref{thm:gp-consist}.
\begin{proof}[Proof of \Cref{thm:gp-consist}]
By the assumption that both $q(\data)$ and $p(f|\data)$ are Gaussian processes:
	     \begin{align}
	     	p(f(\cdot)|\mathcal{D}):\quad \mathcal{GP}(m_p(\cdot), k_p(\cdot,\cdot)), \\
	     	q(f(\cdot)):\quad \mathcal{GP}(m_q(\cdot), k_q(\cdot,\cdot)),
	     \end{align}
	     where $m$ and $k$ denote the mean and covariance functions, respectively. 
	     
    In this theorem, we also assume the measurement points cover all training locations as in \Cref{eq:felbo-proper-lb}, where we have (based on \Cref{thm:evidence-lb}): 
    \begin{align}
        \elbo_{\bX}(q) = \log p(\data)- \KL{q(\bbf^D, \bbf^M)}{p(\bbf^D, \bbf^M|\data)} \le  \log p(\data).
    \end{align}
Therefore, when the variational posterior process is sufficiently expressive and reaches its optimum, we must have 
	     $\KL{q(\bbf^{D}, \bbf^M)}{p(\bbf^D, \bbf^M|\mathcal{D})} = 0$ and thus $\KL{q(\bbf^M)}{p(\bbf^M|\mathcal{D})} = 0$ at $\bX^M = [\bx_1, \dots, \bx_{M}]^\top \in \mathcal{X}^{M}$, which implies
	     \begin{equation} \label{eq:mvn-eq}
	     	\mathcal{N}(m_p(\bX^M), k_p(\bX^M, \bX^M)) = \mathcal{N}(m_q(\bX^M),  k_q(\bX^M, \bX^M)).
	     \end{equation}
	     Here $m(\bX) = [m(\bx_1), \dots, m(\bx_{M})]^\top$, and $\left[k(\bX^M, \bX^M)\right]_{ij} = k(\bx_i, \bx_j)$. 
         
         Remember that in sampling-based functional variational inference, $\bX^M$ are randomly sampled from $c(\bx)$, and $\mathrm{supp}(c)=\mathcal{X}$. Thus when it reaches optimum, we have $\mathbb{E}_{\bX^M\sim c}\KL{q(\bbf^M)}{p(\bbf^M|\mathcal{D})} = 0$. Because the KL divergence is always non-negative, we have that $\KL{q(\bbf^M)}{p(\bbf^M|\mathcal{D})} = 0$ for any $\bX^M \in \mathcal{X}^M$. For adversarial functional variational inference, this is also obvious due to $\displaystyle \sup_{\bX^M}\KL{q(\bbf^M)}{p(\bbf^M|\mathcal{D})} = 0$.
         
         So we have that \Cref{eq:mvn-eq} holds for any $\bX^M \in \mathcal{X}^M$. Given $M > 1$, then for $\forall 1\leq i < j \leq {M}$, we have $m_p(\bx_i) = m_q(\bx_i)$, and $k_p(\bx_i, \bx_j) = k_q(\bx_i, \bx_j)$, which implies
         \begin{equation}
         m_p(\cdot) = m_q(\cdot),\quad k_p(\cdot,\cdot) = k_q(\cdot,\cdot).
         \end{equation} 

	    Because GPs are uniquely determined by their mean and covariance functions, we arrive at the conclusion.
    \end{proof}

\section{Additional Experiments}

\subsection{Contextual Bandits} \label{app:subsec:contextual-bandit}
Here we present the full table for the contextual bandits experiment.

\begin{table}[h]
\caption{Contextual bandits regret. Results are relative to the cumulative regret of the Uniform algorithm. Numbers after the algorithm are the network sizes. We report the mean and standard derivation over 10 trials.}
\vspace{-1em}
\label{app:tab:bandits}
\begin{center}
\resizebox{\textwidth}{!}{
\begin{sc}
\begin{tabular}{lllllllllll}
                        & M. Rank & M. Value & MushRoom                & Statlog                & Covertype               & Financial              & Jester                  & Adult                   & Census                  & Wheel                    \\ \hline
FBNN $1\times 50$            &5.875 & 46.0 & 21.38$\; \pm \;$7.00          & 8.85$\; \pm \;$4.55          & 47.16$\; \pm \;$2.39           & 9.90$\; \pm \;$2.40          & 75.55$\; \pm \;$5.51          & \textbf{88.43$\; \pm \;$1.95} & 51.43$\; \pm \;$2.34          & 65.05$\; \pm \;$20.10          \\
FBNN $2\times 50$        &7.125 & 47.0 & 24.57$\; \pm \;$10.81         & 10.08$\; \pm \;$5.66         & 49.04$\; \pm \;$3.75          & 11.83$\; \pm \;$2.95         & 73.85$\; \pm \;$6.82          & 88.81$\; \pm \;$3.29          & 50.09$\; \pm \;$2.74          & 67.76$\; \pm \;$25.74          \\
FBNN $3\times 50$    &8.125 & 48.9 & 34.03$\; \pm \;$13.95         & 7.73$\; \pm \;$4.37          & 50.14$\; \pm \;$3.13          & 14.14$\; \pm \;$1.99         & 74.27$\; \pm \;$6.54          & 89.68$\; \pm \;$1.66          & 52.37$\; \pm \;$3.03          & 68.60$\; \pm \;$22.24          \\
FBNN $1\times 500$           &\textbf{4.875} & \textbf{45.3} & 21.90$\; \pm \;$9.95          & 6.50$\; \pm \;$2.97          & 47.45$\; \pm \;$1.86          & \textbf{7.83$\; \pm \;$0.77} & 74.81$\; \pm \;$5.57          & 89.03$\; \pm \;$1.78          & 50.73$\; \pm \;$1.53          & 63.77$\; \pm \;$25.80          \\
FBNN $2\times 500$      &\textbf{5.0} & \textbf{44.2} & 23.93$\; \pm \;$11.59         & 7.98$\; \pm \;$3.08          & 46.00$\; \pm \;$2.01          & 10.67$\; \pm \;$3.52         & \textbf{68.88$\; \pm \;$7.09} & 89.70$\; \pm \;$2.01          & 51.87$\; \pm \;$2.38          & 54.57$\; \pm \;$32.92          \\
FBNN $3\times 500$ &\textbf{4.75} & \textbf{44.6} & 19.07$\; \pm \;$4.97          & 10.04$\; \pm \;$5.09         & \textbf{45.24$\; \pm \;$2.11} & 11.48$\; \pm \;$2.20         & 69.42$\; \pm \;$7.56          & 90.01$\; \pm \;$1.70          & \textbf{49.73$\; \pm \;$1.35} & 61.57$\; \pm \;$21.73          \\
MultitaskGP             &5.875 & 46.5 & 20.75$\; \pm \;$2.08          & 7.25$\; \pm \;$1.80          & 48.37$\; \pm \;$3.50          & 8.07$\; \pm \;$1.13          & 76.99$\; \pm \;$6.01          & 88.64$\; \pm \;$3.20          & 57.86$\; \pm \;$8.19          & 64.15$\; \pm \;$27.08          \\
BBB $1\times 50$                 &11.5 & 56.6 & 24.41$\; \pm \;$6.70          & 25.67$\; \pm \;$3.46         & 58.25$\; \pm \;$5.00          & 37.69$\; \pm \;$15.34        & 75.39$\; \pm \;$6.32          & 95.07$\; \pm \;$1.57          & 63.96$\; \pm \;$3.95          & 72.37$\; \pm \;$16.87          \\
BBB $1\times 500$                &13.375 & 68.1 & 26.41$\; \pm \;$8.71          & 51.29$\; \pm \;$11.27        & 83.91$\; \pm \;$4.62          & 57.20$\; \pm \;$7.19         & 78.94$\; \pm \;$4.98          & 99.21$\; \pm \;$0.79          & 92.73$\; \pm \;$9.13          & 55.09$\; \pm \;$13.82          \\
BBAlphaDiv              &16.0 & 87.4 & 61.00$\; \pm \;$6.47          & 70.91$\; \pm \;$10.22        & 97.63$\; \pm \;$3.21          & 85.94$\; \pm \;$4.88         & 87.80$\; \pm \;$5.08          & 99.60$\; \pm \;$1.06          & 100.41$\; \pm \;$1.54         & 95.75$\; \pm \;$12.31          \\
ParamNoise              &10.125 & 53.0 & 20.33$\; \pm \;$13.12         & 13.27$\; \pm \;$2.85         & 65.07$\; \pm \;$3.47          & 17.63$\; \pm \;$4.27         & 74.94$\; \pm \;$7.24          & 95.90$\; \pm \;$2.20          & 82.67$\; \pm \;$3.86          & 54.38$\; \pm \;$16.20          \\
NeuralLinear            &10.375 & 52.3 & 16.56$\; \pm \;$11.60         & 13.96$\; \pm \;$1.51         & 64.96$\; \pm \;$2.54          & 18.57$\; \pm \;$2.02         & 82.14$\; \pm \;$3.64          & 96.87$\; \pm \;$0.92          & 78.94$\; \pm \;$1.87          & 46.26$\; \pm \;$8.40           \\
LinFullPost             &9.25 & ---  & 14.71$\; \pm \;$0.67          & 19.24$\; \pm \;$0.77         & 58.69$\; \pm \;$1.17          & 10.69$\; \pm \;$0.92          & 77.76$\; \pm \;$5.67          & 95.00$\; \pm \;$1.26          & Crash                 & \textbf{33.88$\; \pm \;$15.15} \\
Dropout                 &7.625 & 48.3 & \textbf{12.53$\; \pm \;$1.82} & 12.01$\; \pm \;$6.11         & 48.95$\; \pm \;$2.19          & 14.64$\; \pm \;$3.95         & 71.38$\; \pm \;$7.11          & 90.62$\; \pm \;$2.21          & 58.53$\; \pm \;$2.35          & 77.46$\; \pm \;$27.58          \\
RMS                     &8.875 & 53.0 & 15.29$\; \pm \;$3.06          & 11.38$\; \pm \;$5.63         & 58.96$\; \pm \;$4.97          & 10.46$\; \pm \;$1.61         & 72.09$\; \pm \;$6.98          & 95.29$\; \pm \;$1.50          & 85.29$\; \pm \;$5.85          & 75.62$\; \pm \;$30.43          \\
BootRMS                 &7.5 & 51.9 & 18.05$\; \pm \;$11.20         & \textbf{6.13$\; \pm \;$1.03} & 53.63$\; \pm \;$2.15          & 8.69$\; \pm \;$1.30          & 74.71$\; \pm \;$6.00          & 94.18$\; \pm \;$1.94          & 82.27$\; \pm \;$1.84          & 77.80$\; \pm \;$29.55       \\  
Uniform                 &16.75 & 100  & 100.0$\; \pm \;$0.0               & 100.0$\; \pm \;$0.0                  & 100.0$\; \pm \;$0.0                   & 100.0$\; \pm \;$0.0                  & 100.0$\; \pm \;$0.0                   & 100.0$\; \pm \;$0.0                   & 100.0$\; \pm \;$0.0                   & 100.0$\; \pm \;$0.0          \\ \hline
\end{tabular}
\end{sc}
}
\end{center}
\vspace{-1em}
\end{table}

\subsection{Time-series Extrapolation} \label{app:subsec:time}
\begin{figure}[t]
    \centering
    \subfigure[BBB] { \label{fig:mauna-bbb} 
        \includegraphics[width=0.31\textwidth]{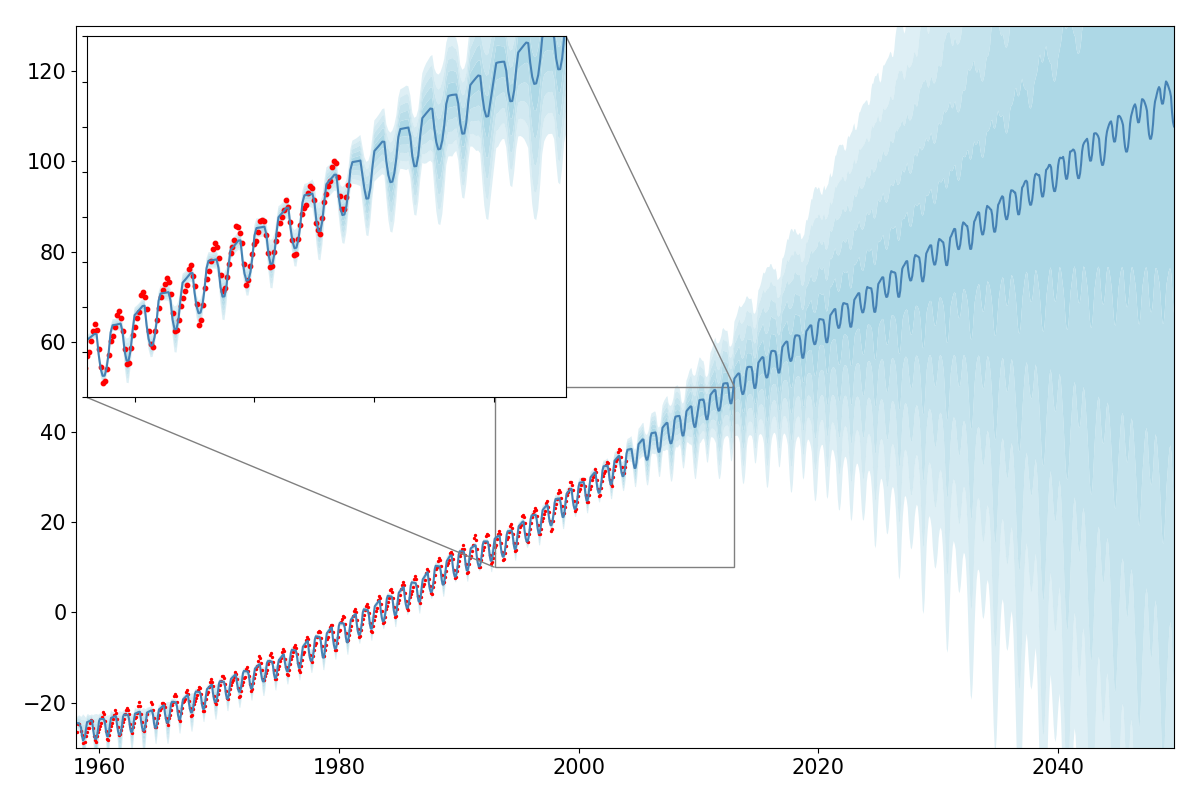} 
    } 
    \subfigure[FBNN] { \label{fig:mauna-fbnn} 
        \includegraphics[width=0.31\textwidth]{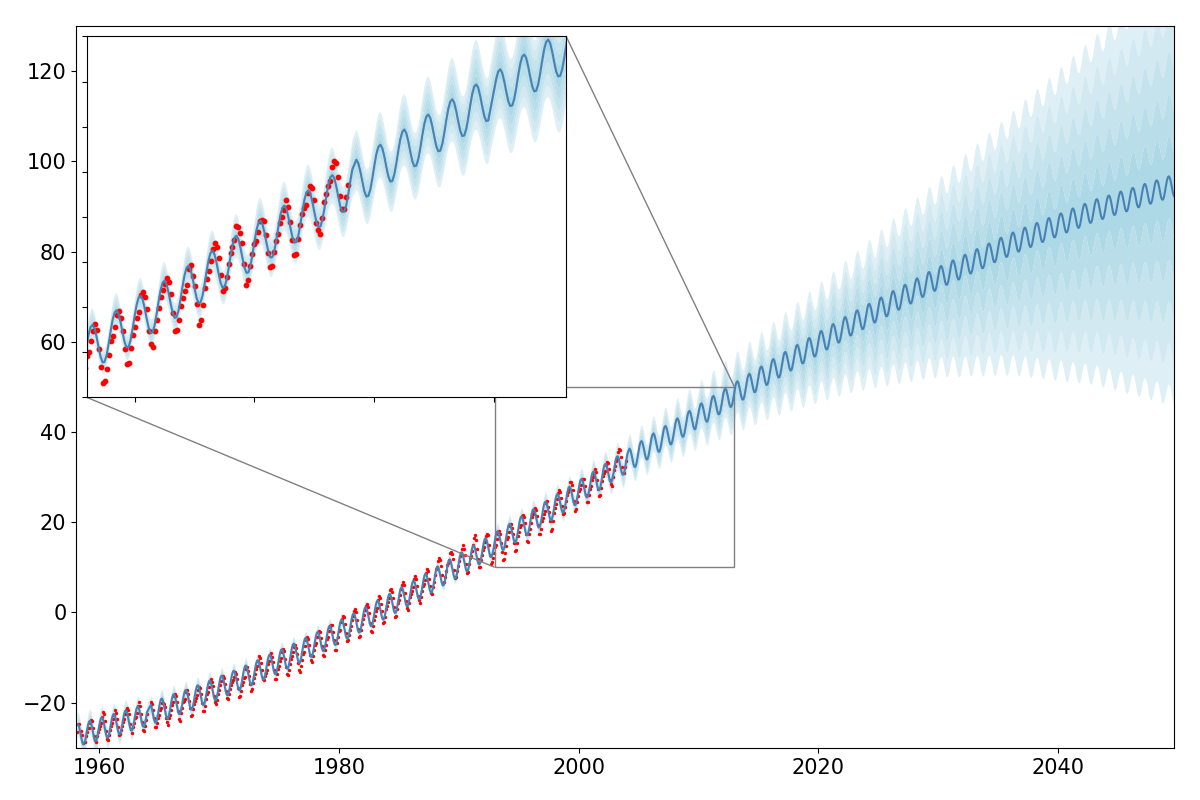} 
    }
    \subfigure[GP] { \label{fig:mauna-gp} 
        \includegraphics[width=0.31\textwidth]{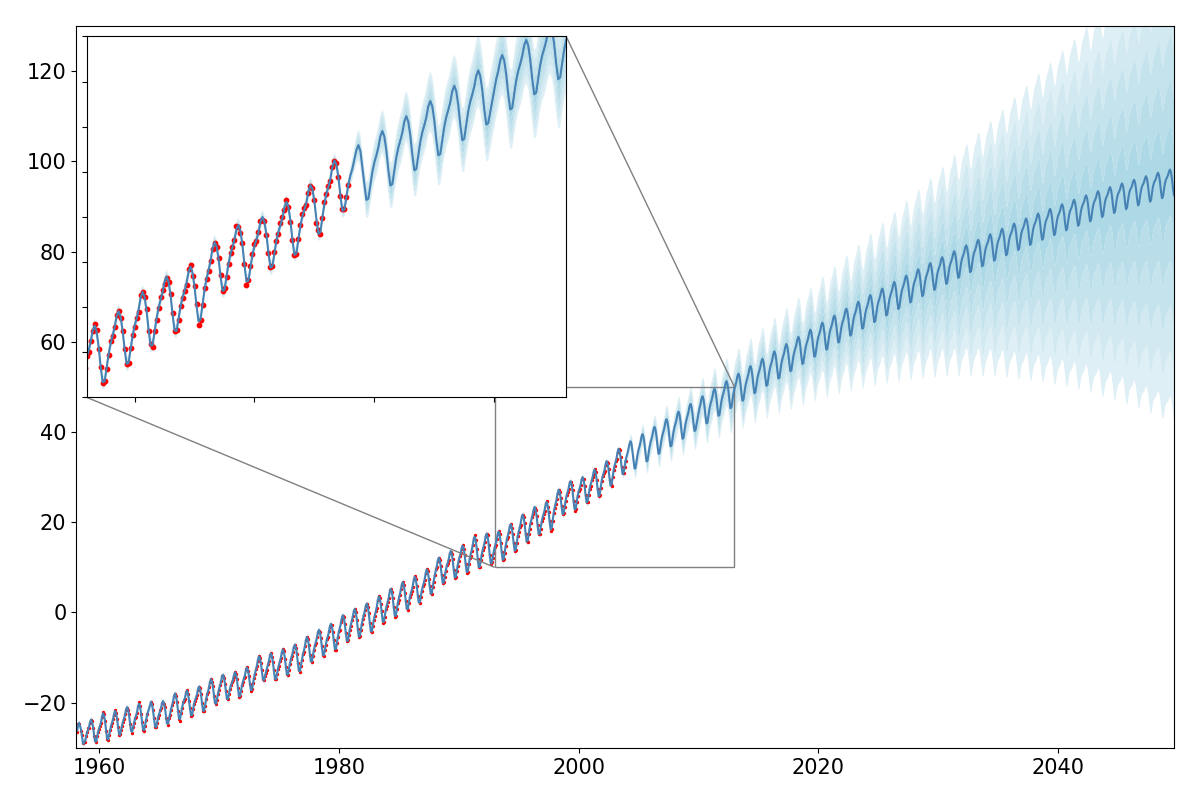} 
    } 
    \caption{Predictions on Mauna datasets. Red dots are training points. The blue line is the mean prediction and shaded areas correspond to standard deviations.} 
    \label{fig:mauna} 
\end{figure}

Besides the toy experiments, we would like to examine the extrapolation behavior of our method on real-world datasets. Here we consider a classic time-series prediction problem concerning the concentration of CO$_2$ in the atmosphere at the Mauna Loa Observatory, Hawaii~\citep{rasmussen2006gaussian}. The training data is given from 1958 to 2003 (with some missing values). Our goal is to model the prediction for an equally long period after 2003 (2004-2048). In \Cref{fig:mauna} we draw the prediction results given by BBB, fBNN, and GP. We used the same BNN architecture for BBB and fBNN: a ReLU network with 2 hidden layers, each with 100 units, and the input is a normalized year number augmented by its $\sin$ transformation, whose period is set to be one year. This special design allows both BBB and fBNN to fit the periodic structure more easily. Both models are trained for 30k iterations by the Adam optimizer, with learning rate $0.01$ and batch size 20. For fBNN the prior is the same as the GP experiment, whose kernel is a combination of RBF, RBF$\times$PER (period set to one year), and RQ kernels, as suggested in \citet{rasmussen2006gaussian}. Measurement points include 20 training samples and 10 points sampled from $\mathbb{U}[1958, 2048]$, and we jointly train the prior GP hyperparameters with fBNN.

In \Cref{fig:mauna} we could see that the performance of fBNN closely matches the exact prediction by GP. Both of them give visually good extrapolation results that successfully model the long-term trend, local variations, and periodic structures. In contrast, weight-space prior and inference (BBB) neither captures the right periodic structure, nor does it give meaningful uncertainty estimates.

\subsection{Bayesian Optimization} \label{app:bo}

\begin{figure}[ht]
\centering
\subfigure[RBF] { \label{fig:bo-rbf} 
\includegraphics[width=0.32\textwidth]{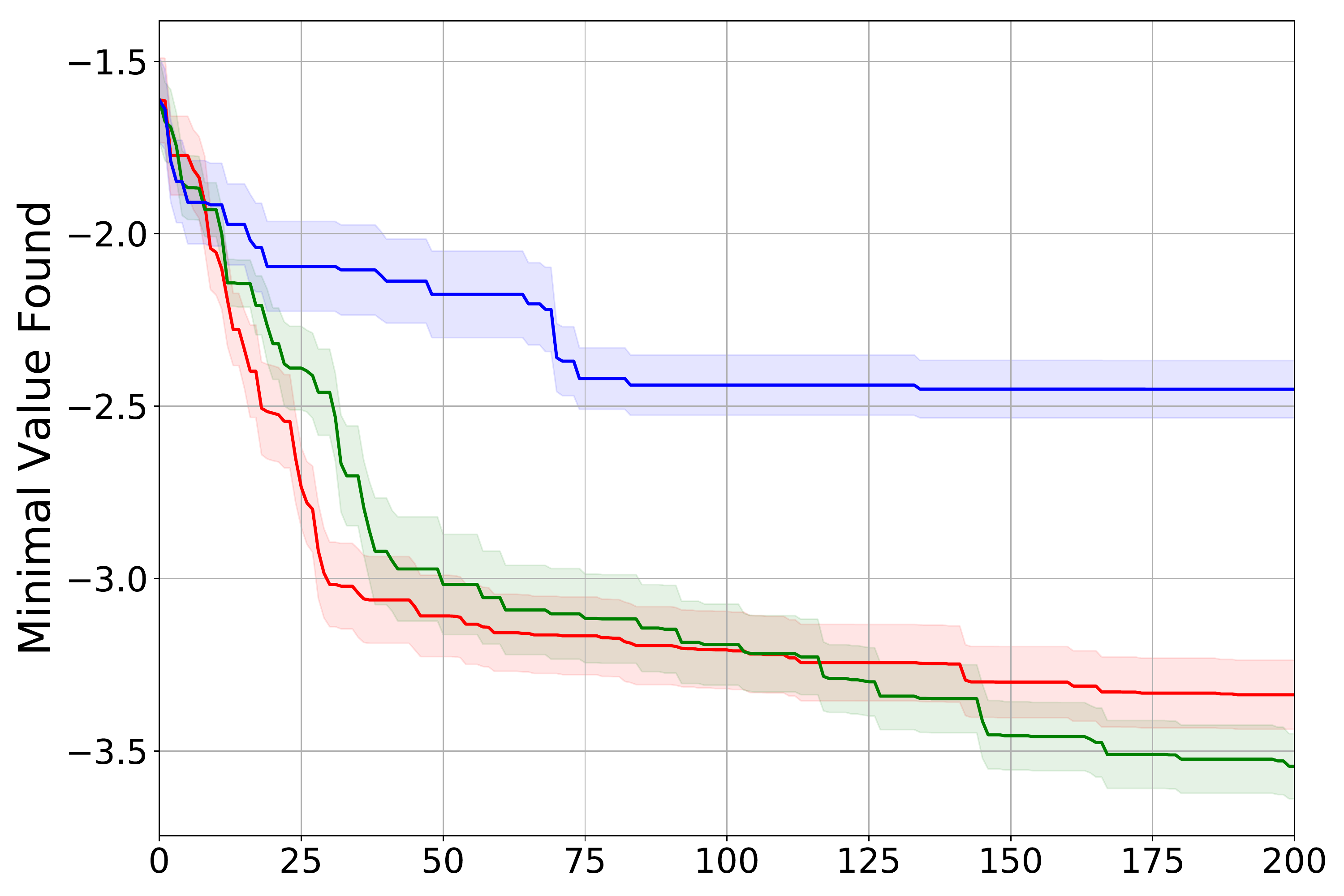} 
} 
\hspace{-1em}
\subfigure[ArcCosine] { \label{fig:bo-matern}
\includegraphics[width=0.32\textwidth]{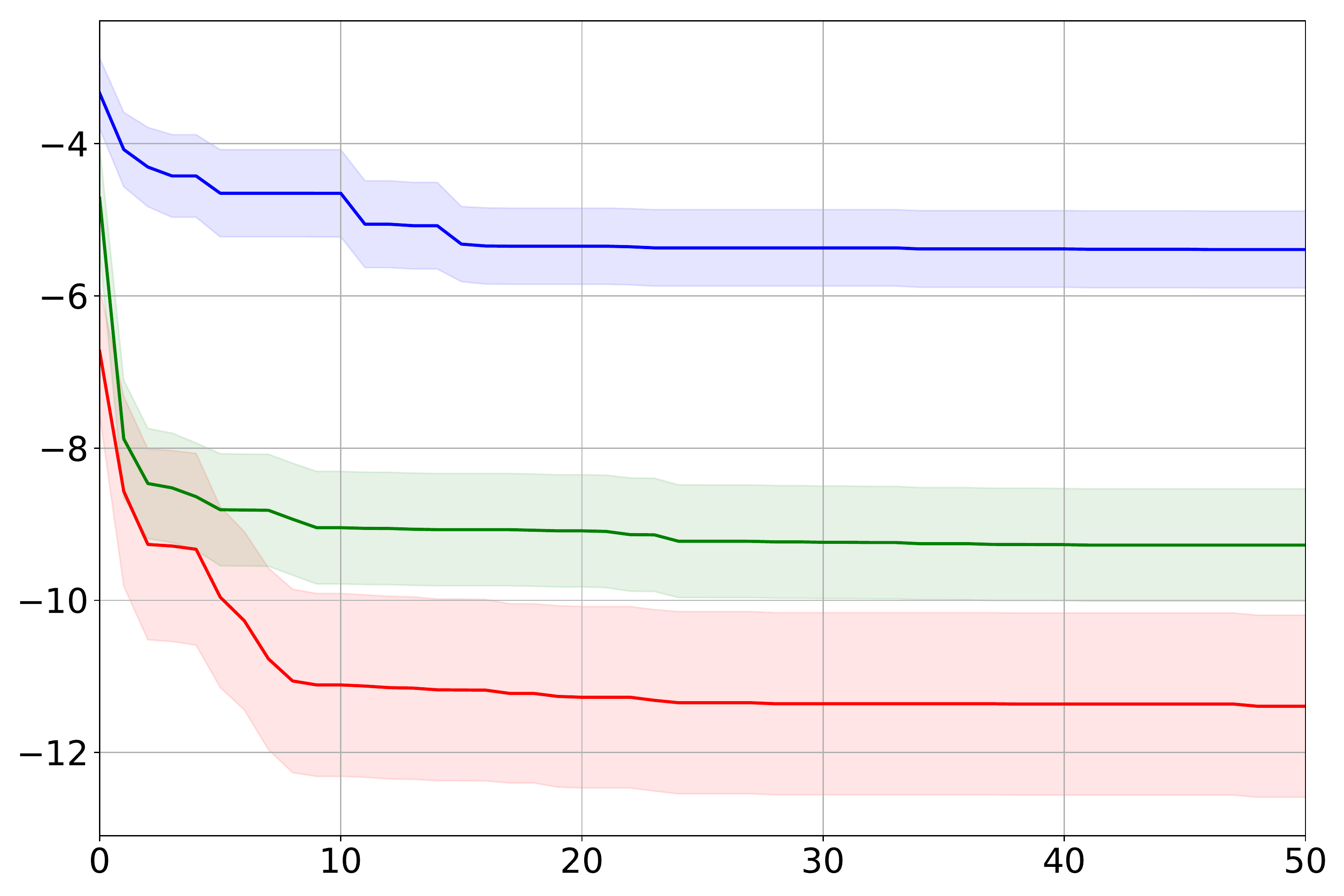} 
}
\hspace{-1em}
\subfigure[Matern12] { \label{fig:bo-matern}
\includegraphics[width=0.32\textwidth]{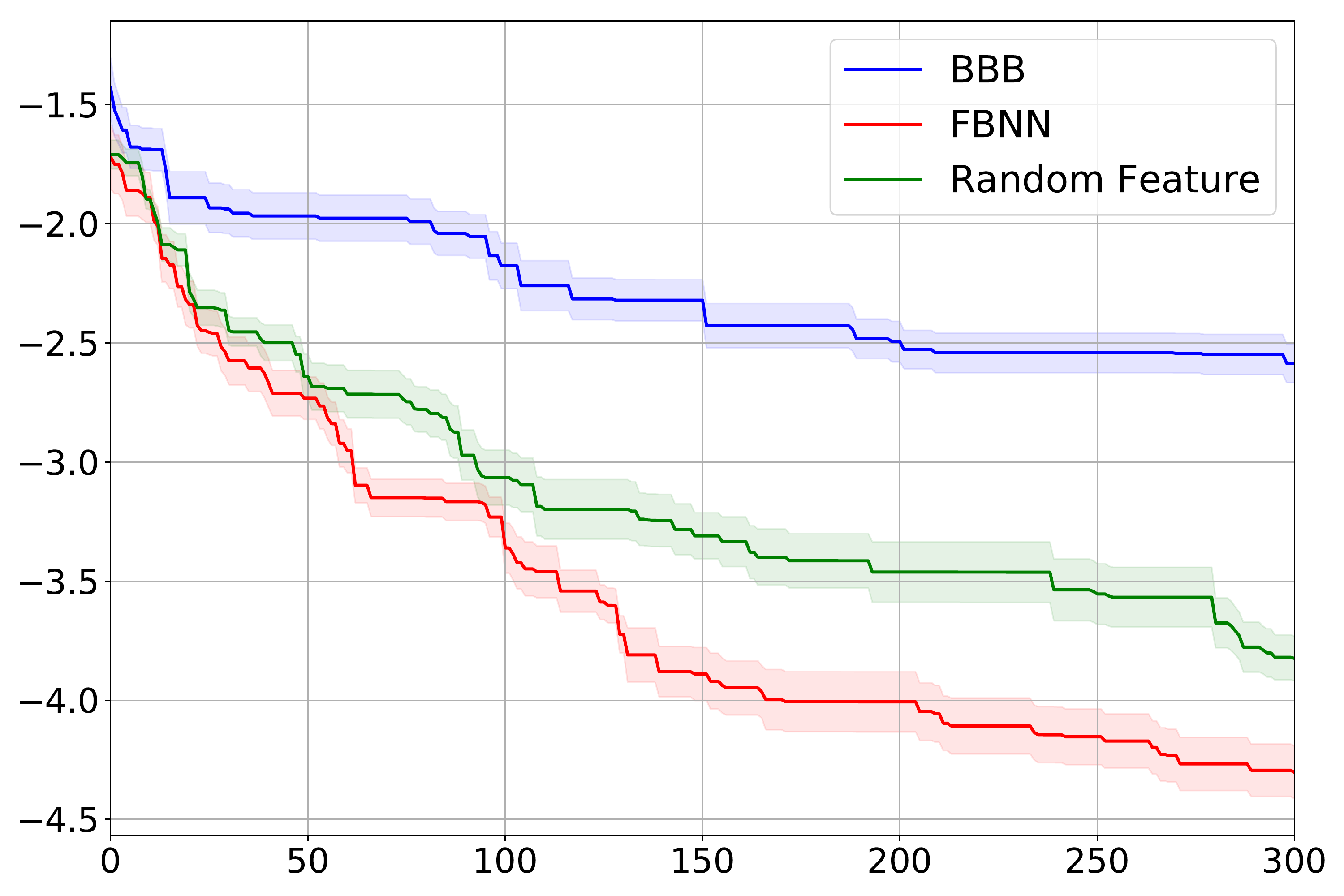} 
}
\caption{Bayesian Optimization. We plot the minimal value found along iterations. We compare fBNN, BBB and Random Feature methods for three kinds of functions corresponding to RBF, Order-$1$ ArcCosine and Matern12 GP kernels. We plot mean and 0.2 standard derivation over 10 independent runs.} 
\label{fig:bo} 
\end{figure}

In this section, we adopt Bayesian Optimization to explore the advantage of coherent posteriors. Specifically, we use Max Value Entropy Search (MES)~\citep{wang2017max}, which tries to maximize the information gain about the minimum value $y^{\star}$, 
\begin{align}
\alpha_t (\bx) = \mathbb{H}(p(y|D_t, \bx)) - \mathbb{H}(p(y|D_t, \bx, y^{\star})) \approx \frac{1}{K} \underset{y^{\star}}{\sum} [\frac{\gamma_{y^{\star}}(\bx) \phi(\gamma_{y^{\star}}(\bx)) }{\Psi(\gamma_{y^{\star}}(\bx))} - \log (\Psi(\gamma_{y^{\star}}(\bx)))] \notag
\end{align}
Where $\phi$ and $\Psi$ are probability density function and cumulative density function of a standard normal distribution, respectively. The $y^{\star}$ is the minimum of a random function from the posterior, and $\gamma_{y^{\star}}(\bx)=\frac{\mu_t(\bx) - y^{\star}}{\sigma_t(\bx)}$. 

With a probabilistic model, we can compute or estimate the mean $\mu_t(\bx)$ and the standard deviation $\sigma_t(\bx)$. However, to compute the MES acquisition function, samples $y^{\star}$ of function minima are required as well, which leads to difficulties. Typically when we model the data with a GP, we can get the posterior on a specific set of points but we don't have access to the extremes of the underlying function. In comparison, if the function posterior is represented in a parametric form, we can perform gradient decent easily and search for the minima. 

We use 3-dim functions sampled from some Gaussian process prior for Bayesian optimization. Concretely, we experiment with samples from RBF, Order-1 ArcCosine and Matern12 kernels. We compare three parametric approaches: fBNN, BBB and Random Feature~\citep{rahimi2008random}. For fBNN, we use the true kernel as functional priors. In contrast, ArcCosine and Matern12 kernels do not have simple explicit random feature expressions, therefore we use RBF random features for all three kernels. When looking for minima, we sample 10 $y^{\star}$. For each $y^{\star}$, we perform gradient descent along the sampled parametric function posterior with 30 different starting points. We use 500 dimensions for random feature. We use network with $5 \times 100 $ for fBNN. For BBB, we select the network within $1 \times 100, 3 \times 100$. Because of the similar issue in \Cref{fig:fvi-occam}, using larger networks won't help for BBB. We use batch size 30 for both fBNN and BBB. The measurement points contain 30 training points and 30 points uniformly sampled from the known input domain of functions. For training, we rescale the inputs to $[0, 1]$, and we normalize outputs to have zero mean and unit variance. We train fBNN and BBB for 20000 iterations and anneal the coefficient of log likelihood term linearly from 0 to 1 for the first 10000 iterations. The results with 10 runs are shown in \Cref{fig:bo}.

As seen from \Cref{fig:bo}, fBNN and Random feature outperform BBB by a large margin on all three functions. We also observe fBNN performs slightly worse than random feature in terms of RBF priors. Because random feature method is exactly a GP with RBF kernel asymptotically, it sets a high standard for the parametric approaches. In contrast, fBNN outperforms random feature for both ArcCosine and Matern12 functions. This is because of the big discrepancy between such kernels and RBF random features. Because fBNN use true kernels, it models the function structures better. This experiment highlights a key advantage of fBNN, that fBNN can learn parametric function posteriors for various priors.

\subsection{Varying depth} \label{app:subsec:depth}
\begin{table}[ht]

\vspace{-1em}
\label{uci regression}
\caption{Averaged test RMSE and log-likelihood for the regression benchmarks. We compared BBB, fBNNs and VFE. The numbers $a \times b$ represent networks with $a$ hidden layers of $b$ units.}\label{app:tab:varying-depth}
\vspace{-1em}
\small
\begin{center}
\resizebox{\textwidth}{!}{
\begin{tabular}{lcccccccc}
\toprule
\textbf{}     &    & \multicolumn{3}{c}{Test RMSE} & \multicolumn{3}{c}{Test log-likelihood} \\
\textbf{Dataset} & N                 & BBB          & FBNN       & VFE              & BBB          & FBNN  & VFE \\
\midrule 
Kin8nm (1$\times$100)           & 8192 & 0.082$\pm$0.001 & 0.079$\pm$0.001 & 0.071$\pm$0.001 & 1.082$\pm$0.008 & 1.112$\pm$0.007 & 1.241$\pm$0.005 \\
Kin8nm (2$\times$100)           & 8192 & 0.074$\pm$0.001 & 0.075$\pm$0.001 & 0.071$\pm$0.001 & 1.191$\pm$0.006 & 1.151$\pm$0.007 & 1.241$\pm$0.005 \\
Kin8nm (5$\times$500)           & 8192 & 0.266$\pm$0.003 & 0.076$\pm$0.001 & 0.071$\pm$0.001 & -0.279$\pm$0.007 & 1.144$\pm$0.008 & 1.241$\pm$0.005 \\
Power Plant (1$\times$100)      & 9568 & 4.127$\pm$0.057 & 4.099$\pm$0.051 & 3.092$\pm$0.052 & -2.837$\pm$0.013 & -2.833$\pm$0.012 & -2.531$\pm$0.018 \\
Power Plant (2$\times$100)      & 9568 & 4.081$\pm$0.054 & 3.830$\pm$0.055 & 3.092$\pm$0.052 & -2.826$\pm$0.013 & -2.763$\pm$0.013 & -2.531$\pm$0.018 \\
Power Plant (5$\times$500)      & 9568 & 17.166$\pm$0.099 & 3.542$\pm$0.054 & 3.092$\pm$0.052 & -4.286$\pm$0.007 & -2.691$\pm$0.016 & -2.531$\pm$0.018 \\
\bottomrule
\end{tabular}
}
\end{center}
\vspace{-1em}
\end{table}

To compare with Variational Free Energy (VFE)~\citep{titsias2009variational}, we experimented with two medium-size datasets so that we can afford to use VFE with full batch.
For VFE, we used 1000 inducing points initialized by k-means of training point.
For BBB and FBNNs, we used batch size 500 with a budget of 2000 epochs.
As shown in \Cref{app:tab:varying-depth}, FBNNs performed slightly worse than VFE, but the gap became smaller as we used larger networks.
By contrast, BBB totally failed with large networks (5 hidden layers with 500 hidden units each layer).
Finally, we note that the gap between FBNNs and VFE diminishes if we use fewer inducing points (e.g., 300 inducing points).

\subsection{Large scale regression with deeper networks} \label{app:subsec:deep}

\begin{table}[ht]

\vspace{-1em}
\label{uci regression}
\caption{Large scale regression. BBB and FBNN used networks with 5 hidden layers of 100 units.}\label{app:tab:large-scale}
\vspace{-1em}
\small
\begin{center}
\resizebox{0.95\textwidth}{!}{
\begin{tabular}{lcccccccc}
\toprule
\textbf{}     &    & \multicolumn{3}{c}{Test RMSE} & \multicolumn{3}{c}{Test log-likelihood} \\
\textbf{Dataset} & N                 & BBB          & FBNN       & SVGP              & BBB          & FBNN  & SVGP \\
\midrule 
Naval      & 11934 & 1.3E-4$\pm$0.000 & 0.7E-4$\pm$0.000 & 0.3E-4$\pm$0.000 & 6.968$\pm$0.014 & {7.237$\pm$0.009} & 8.523$\pm$0.06 \\
Protein     & 45730       & 3.684$\pm$0.041  & {3.659$\pm$0.026} & 3.740$\pm$0.015 & -2.715$\pm$0.012 & {-2.721$\pm$0.010}  & -2.736$\pm$0.003 \\
Video Memory  & 68784      & 0.984$\pm$0.074  & {0.967$\pm$0.040} & 1.417$\pm$0.234 & {-1.231$\pm$0.078} & -1.337$\pm$0.058   & -1.723$\pm$0.179        \\
Video Time & 68784       & {1.056$\pm$.0.178}  & 1.083$\pm$0.288 & 3.216$\pm$1.154 & {-1.180$\pm$0.070} & -1.468$\pm$0.279   & -2.475$\pm$0.409   \\
GPU         & 241600      & 5.136$\pm$0.087 & {4.806$\pm$0.116} & 21.287$\pm$0.571  & -2.992$\pm$0.013 & {-2.973$\pm$0.019} & -4.557$\pm$0.021 \\
\bottomrule
\end{tabular}
}
\end{center}
\vspace{-1em}
\end{table}

In this section we experimented on large scale regression datasets with deeper networks. For BBB and fBNNs, we used a network with 5 hidden layers of 100 units, and kept all other settings the same as \Cref{exp:reg:large}. We also compared with the stochastic variational Gaussian processes (SVGP) \citep{hensman2013gaussian}, which provides a principled mini-batch training for sparse GP methods, thus enabling GP to scale up to large scale datasets. For SVGP, we used 1000 inducing points initialized by k-means of training points (Note we cannot afford larger size of inducing points because of the cubic computational cost). We used batch size 2000 and iterations 60000 to match the training time with fBNNs. Likewise for BNNs, we  used validation set to tune the learning rate from $\{0.01, 0.001\}$. We also tuned between not annealing the learning rate or annealing it by 0.1 at 30000 iterations. We evaluated the validating set in each epoch, and selected the epoch for testing based on the validation performance. The averaged results over 5 runs are shown in \Cref{app:tab:large-scale}.

As shown in \Cref{app:tab:large-scale}, SVGP performs better than BBB and fBNNs in terms of the smallest \textit{naval} dataset. However, with dataset size increasing, SVGP performs worse than BBB and fBNNs by a large margin. This stems from the limited capacity of 1000 inducing points, which fails to act as sufficient statistics for large datasets. In contrast, BNNs including BBB and fBNNs can use larger networks freely without the intractable computational cost.

\section{Implementation Details} \label{app:sec:details}
\subsection{Injected Noises for Gaussian Process Priors}\label{app:injected-noise}

For Gaussian process priors, $p(\bbf^\bX)$ is a multivariate Gaussian distribution, which has an explicit density. Therefore, we can compute the gradients ${\color{red} \nabla_\bbf \log p_{\phi}({\bbf^\bX})}$ analytically.

In practice, we found that the GP kernel matrix suffers from stability issues. To stabilize the gradient computation, we propose to inject a small amount of Gaussian noise on the function values, i.e., to instead estimate the gradients of $\nabla_{\phi}  \KL{q_{\phi} * p_{\gamma}}{p * p_{\gamma}}$, where $p_{\gamma} = \mathcal{N}(0, \gamma^2)$ is the noise distribution. This is like the instance-noise trick that is commonly used for stabilizing GAN training~\citep{sonderby2016amortised}.
Note that injecting the noise on the GP prior is equivalent to have a kernel matrix $\mathbf{K} + \gamma^2\bI$. Beyond that, injecting the noise on the parametric variational posterior does not affect the reparameterization trick either. Therefore all the previous estimation formulas still apply.

\subsection{Implicit Priors} \label{app:implicit-prior}
Our method is applicable to implicit priors. We experiment with piecewise constant prior and piecewise linear prior. Concretely, we randomly generate a function $f: [0, 1] \rightarrow R$ with the specific structure. To sample piecewise functions, we first sample $n \sim \mathrm{Poisson}(3.)$, then we have $n+1$ pieces within $[0, 1]$. We uniformly sample $n$ locations from $[0, 1]$ as the changing points. For piecewise constant functions, we uniformly sample $n+1$ values from $[0, 1]$ as the function values in each piece; For piecewise linear functions, we uniformly sample $n+1$ values for the values at first $n+1$ locations, we force $f(1)=0.$. Then we connect together each piece by a straight line.

\end{document}